\newcommand*\bigcdot{\mathpalette\bigcdot@{.5}}
\newcommand*\bigcdot@[2]{\mathbin{\vcenter{\hbox{\scalebox{#2}{$\m@th#1\bullet$}}}}}
\icmltitlerunning{Beyond the One-Step Greedy Approach in Reinforcement Learning}
\def\BState{\State\hskip-\ALG@thistlm}
\newtheorem{proposition}{Proposition}
\newtheorem{corollary}[proposition]{Corollary}
\newtheorem{lemma}[proposition]{Lemma}
\newtheorem{theorem}[proposition]{Theorem}
\newtheorem{remark}{Remark}
\def\eqdef{\stackrel{\text{def}}{=}}
\def\E{{\mathbb E}}
\newcommand\Ga[2]{{\hat {\cal G}^{#1}_{#2}}}
\def\G{{\mathcal G}}
\def\S{{\mathcal S}}
\def\A{{\mathcal A}}
\newcommand{\condE}[1]{\E_{\mid #1}}
\begin{document}
\twocolumn[
\icmltitle{Beyond the One-Step Greedy Approach in Reinforcement Learning}




\begin{icmlauthorlist}
\icmlauthor{Yonathan Efroni}{tech}
\icmlauthor{Gal Dalal}{tech}
\icmlauthor{Bruno Scherrer}{inr}
\icmlauthor{Shie Mannor}{tech}
\end{icmlauthorlist}

\icmlaffiliation{tech}{Technion, Israel Institute of Technology}
\icmlaffiliation{inr}{INRIA, Villers-lès-Nancy, F-54600, France}

\icmlcorrespondingauthor{Yonathan Efroni}{jonathan.efroni@gmail.com}

\icmlkeywords{Machine Learning, ICML}

\vskip 0.3in
]



\printAffiliationsAndNotice{} 

\begin{abstract}

The famous Policy Iteration algorithm alternates between policy improvement and policy evaluation. Implementations of this algorithm with several variants of the latter evaluation stage, e.g, $n$-step and trace-based returns, have been analyzed in previous works. However, the case of multiple-step lookahead policy improvement, despite the recent increase in empirical evidence of its strength, has to our knowledge not been carefully analyzed yet. In this work, we introduce the first such analysis. Namely, we formulate variants of multiple-step policy improvement, derive new algorithms using these definitions and prove their convergence. Moreover, we show that recent prominent Reinforcement Learning algorithms fit well into our unified framework. We thus shed light on their empirical success and give a recipe for deriving new algorithms for future study.
\end{abstract}

\section{Introduction}
Policy Iteration (PI) lies at the core of Reinforcement Learning (RL) and of many planning and on-line learning methods \cite{konda1999actor,kakade:02,schulman2015trust,mnih2016asynchronous,silver2017mastering}. 
The classic PI algorithm repeats consecutive stages of i) 1-step greedy policy improvement with respect to (w.r.t.) a value function estimate, and ii) evaluation of the value function w.r.t. the greedy policy. Although multiple variants of the evaluation task have been considered \cite{puterman1978modified, lpi,sutton1998reinforcement}, the usually considered policy update is the 1-step greedy improvement.    

Conducting policy improvement using the common 1-step greedy approach is a specific choice, which is not necessarily the most appropriate one. Indeed, it was empirically recently suggested that greedy approaches w.r.t. multiple steps perform better than w.r.t. 1-step. Notable examples are Alpha-Go and Alpha-Go-Zero \cite{silver2016mastering, silver2017mastering,silver2017mastering2}. There, an approximate online version of multiple-step greedy improvement is implemented via Monte Carlo Tree Search (MCTS) \cite{browne2012survey}. 
The celebrated MCTS algorithm, which instantiates several steps of lookahead improvement, encompasses additional historical impressive accomplishments dating back to the past century and previous decade \cite{tesauro1997line,sheppard2002world,bouzy2004monte,veness2009bootstrapping}. 
To the best of our knowledge, and despite such empirical successes, the use of a multiple-step greedy policy improvement has never been rigorously studied before. The motivation of this work is to fill this gap, and suggest new possible algorithms in this spirit. 

The paper is organized as follows. We start by defining the $h$-greedy policy, 
a policy that is greedy w.r.t. a horizon of $h$ steps. Using this definition, we introduce the $h$-PI algorithm, a class of PI algorithms with multiple-step greedy policy improvement and a guarenteed convergence to the optimal policy. We stress that the term ``$n$-step return'' often refered to in the literature, is used in the context of policy evaluation \cite{sutton1998reinforcement,seijen2014true}. Therefore, to avoid confusion, we choose to denote the multiple-step greedy policy by the letter $h$.

 We then introduce a novel class of optimal Bellman operators, which is controlled by a continuous parameter $\kappa\in[0,1]$. This operator is used to define a new greedy policy, the $\kappa$-greedy policy, leading to a new PI algorithm which we name $\kappa$-PI. 
This is analogous to the famous TD($\lambda$) algorithm \cite{sutton1988learning} for the improvement stage. In the TD($\lambda$) algorithm, the $\lambda$ parameter interpolates between the single-step evaluation update for $\lambda =0$ and the infinite-horizon (Monte Carlo) evaluation update for $\lambda=1$. Similarly, for $\kappa=0$, we recover the traditional 1-step greedy policy for the improvement update and for $\kappa=1$ we get the infinite-horizon greedy policy, i.e. the optimal policy. Roughly speaking, the $\kappa$-greedy policy can be viewed as allowing to make an `interpolation' over all $h$-greedy policies. Similarly to the previous paragraph, we use the letter $\kappa$ to avoid confusion with the parameter $\lambda$ of TD($\lambda$). Remarkably, we show that computing the $\kappa$-greedy policy is equivalent to solving the optimal policy of a surrogate $\kappa\gamma$-discounted and stationary MDP.

As an additional generalization, we introduce the $\kappa\lambda$-PI. This algorithm has a similar improvement step as the $\kappa$-PI, but its policy evaluation stage is `relaxed', similarly to $\lambda$-PI \cite{lpi}.
The $\kappa\lambda$-PI further illustrates the difference between the $\lambda$ and $\kappa$ parameters. While the former controls the depth of the evaluation task in a way similar to previous works \cite{sutton1998reinforcement,seijen2014true}, the latter controls the depth of the improvement step.  


Next, we discuss the relation of this work to existing literature, and argue that it offers a generalized view for several recent impressive empirical advancements in RL, which are seemingly unrelated \cite{schulman2015high,silver2017mastering}. We thus show relevance of our proposed mathematical framework to current state-of-the-art algorithms. We conclude with an empirical display of the influence of these new parameters $\kappa$ and $h$ on a basic planning task. We empirically demonstrate that the best performance is obtained with non-trivial choices of them. This motivates future study of new RL algorithms which can be derived from the introduced framework in this work.
\section{Preliminaries}
Our framework is the infinite-horizon discounted Markov Decision Process (MDP). An MDP is defined as the 5-tuple $(\mathcal{S}, \mathcal{A},P,R,\gamma)$ \cite{puterman1994markov}, where ${\mathcal S}$ is a finite state space, ${\mathcal A}$ is a finite action space, $P \equiv P(s'|s,a)$ is a transition kernel, $R \equiv r(s,a)$ is a reward function, and $\gamma\in(0,1)$ is a discount factor. Let $\pi: \mathcal{S}\rightarrow \mathcal{P}(\mathcal{A})$ be a stationary policy, where $\mathcal{P}(\mathcal{A})$ is a probability distribution on $\mathcal{A}$. Let $v^\pi \in \mathbb{R}^{|\mathcal{S}|}$ be the value of a policy $\pi,$ defined in state $s$ as $v^\pi(s) \equiv \condE{s}^\pi[\sum_{t=0}^\infty\gamma^tr(s_t,\pi(s_t))]$, where $\condE{s}^\pi$ denotes expectation w.r.t. the distribution induced by $\pi$ and conditioned on the event $\{s_0=s\}.$  For brevity, we respectively denote the reward and value at time $t$ by $r_t\equiv r(s_t,\pi_t(s_t))$ and $v_t\equiv v(s_t).$  It is known that
\begin{align*}
v^\pi=\sum_{t=0}^\infty \gamma^t (P^\pi)^t r^\pi=(I-\gamma P^\pi)^{-1}r^\pi,
\end{align*}
with the component-wise values $[P^\pi]_{s,s'}  \triangleq P(s'\mid s, \pi(s))$ and $[r^\pi]_s \triangleq  r(s,\pi(s))$. Our goal is to find a policy $\pi^*$ yielding the optimal value $v^*$ such that
\begin{equation}
v^* = \max_\pi (I-\gamma P^\pi)^{-1} r^\pi = (I-\gamma P^{\pi^*})^{-1} r^{\pi^*}.\label{mdp}
\end{equation}
This goal can be achieved using the three classical operators (with equalities holding component-wise):  
\begin{align}
\forall v,\pi,~  T^\pi v & =  r^\pi +\gamma P^\pi v, \label{def: Tpi} \\
\forall v,~  T v & =  \max_\pi T^\pi v, \\
\forall v,~\G(v)&= \{\pi : T^\pi v = T v\}, \label{def: greedy policy}
\end{align}
where $T^\pi$ is a linear operator, $T$ is the optimal Bellman operator and both $T^\pi$ and $T$ are $\gamma$-contraction mappings w.r.t. the max norm. It is known that the unique fixed points of $T^\pi$ and $T$ are $v^\pi$ and $v^*$, respectively. The set $\G(v)$ is the standard set of 1-step greedy policies w.r.t. $v$. 
 Furthermore, given $v^*$, the set $\G(v^*)$ coincides with that of stationary optimal policies. In other words, every policy that is 1-step greedy w.r.t. $v^*$ is optimal and vice versa.

\section{The $h$-Greedy Policy and $h$-PI}
\label{sec: h-step greedy}

In this section we introduce the $h$-greedy policy, a generalization of the 1-step greedy policy. This leads us to formulate a new PI algorithm which we name ``$h$-PI''. The $h$-PI is derived by replacing the improvement stage of the PI, i.e, the 1-step greedy policy, with the $h$-greedy policy. We further prove its convergence and show it inherits most properties of PI.


Let $h\in \mathbb{N}\textbackslash \{0\}$. A $h$-greedy policy w.r.t. a value function $v$ belongs to the following set of policies,
\begin{align}
  & \arg\max\limits_{\pi_0}\! \max\limits_{\pi_1,..,\pi_{h-1}} \!\!\!\condE{\bigcdot}^{\pi_0\dots\pi_{h-1}}\!\!\left[\sum_{t=0}^{h-1}\!\!\gamma^t r(s_t,\!\pi_t(s_t))\!+\!\gamma^h v(s_h)\right] \nonumber\\
  &\quad=\arg\max_{\pi_0} \condE{\bigcdot}^{\pi_0}\left[r(s_0,\pi_0(s_0))\!+\!\gamma \!\left(T^{h-1} v\right)\!(s_1) \right]\label{eq_h_greedy_def}
\end{align}
where the notation $\condE{\bigcdot}^{\pi_0\dots\pi_{h-1}}$ means that we condition on the trajectory induced by the choice of actions $(\pi_0(s_0),\pi_1(s_1),\dots, \pi_{h-1}(s_{h-1}))$ and the starting state $s_0=\bigcdot$. Since the argument in \eqref{eq_h_greedy_def} is a vector, the maximization is component-wise, i.e., we wish to find the choice of actions that will jointly maximize the entries of the vector.
Thus, the $h$-greedy policy chooses the first 
optimal action of a non-stationary, optimal control problem with horizon $h$. 
As the equality in \eqref{eq_h_greedy_def} suggests, this policy can be equivalently interpreted as the 1-step greedy policy w.r.t. $T^{h-1}v$. 
Although the former view is more natural, it is, in fact, the latter on which this section's proofs are based.  
Thus, the set of $h$-greedy policies w.r.t. $v$, $\G_h(v)$, can be expressed as follows:
\begin{align}
\forall v,\pi,~ T_h^\pi v &\eqdef T^{\pi}T^{h-1} v, \label{eq: Thpi eval def} \\
\forall v,~\G_h(v)&= \{\pi: T_h^\pi v = T^h v\}. \label{eq: Thpi def}
\end{align}
\begin{remark}\label{astar}
	This is a generalization of the standard 1-step greedy operation, which one recovers by taking $h=1$.
	Each call to the greedy operator $\G_h$ amounts to identifying for all 
	 states, the best first action of an $h$-horizon optimal control problem.
	More interestingly, one may compute these first optimal actions with specifically designed procedures such as $A^*$-like/optimistic tree exploration algorithm \cite{hren08, busoniu12, munosbook14, szorenyi14, grill16}.
\end{remark}

With this set of operators, we consider Algorithm~\ref{alg:hPI}, the $h$-PI algorithm, where the assignments hold point-wise. 
This algorithm alternates between i) identifying the $h$-greedy policy, i.e, solving the $h$-horizon optimal control problem, and ii) estimating the value of this policy.
%
%
\begin{algorithm}[H]
	\caption{$h$-PI}
	\label{alg:hPI}
	\begin{algorithmic}
		\STATE {\bfseries Initialize:} $h \in \mathbb{N} \textbackslash \{0\},~v \in \mathbb{R}^{|\S|}$
		\WHILE{the value $v$ changes}
		\STATE $\pi \gets \arg\max\limits_{\pi_0}\!\max\limits_{\pi_1,..,\pi_{h-1}}\!\!\!\condE{\bigcdot}^{\pi_0\dots\pi_{h-1}}\!\!\left[\sum_{t=0}^{h-1}\gamma^t r_t+\gamma^h v_h\right]$
		\STATE $v \gets \condE{\bigcdot}^{\pi}\left[\sum_{t=0}^\infty \gamma^t r_t\right]$
		\ENDWHILE
		\STATE {\bfseries Return $\pi,v$}
	\end{algorithmic}
\end{algorithm}

As we are about to see, this new algorithm inherits most properties of standard PI. We start by showing a monotonicity property for the $h$-greedy operator.
\begin{lemma}[Policy improvement of the $h$-greedy policy]\label{lemma_hPI_improves}
	Let $\pi' \in \G_h(v^\pi).$ Then $v^{\pi'}\geq v^{\pi}$ component-wise, with equality holding if and only if $\pi$ is an optimal policy.
\end{lemma}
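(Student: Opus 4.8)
The plan is to reduce the $h$-greedy improvement to an ordinary one-step greedy improvement with respect to the shifted value $w \eqdef T^{h-1}v^\pi$, and then replay the classical Policy Iteration monotonicity argument on $w$. The enabling observation is exactly the equality in \eqref{eq_h_greedy_def} together with \eqref{eq: Thpi def}: a policy $\pi' \in \G_h(v^\pi)$ is nothing but a one-step greedy policy w.r.t. $w$, so it satisfies the identity $T^{\pi'} w = T w = T^h v^\pi$. Everything I do is built on this single identity plus the monotonicity of the operators (which holds because the kernels $P^\pi$ have nonnegative entries).

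First I would establish the auxiliary fact that the tower $\{T^k v^\pi\}_{k\ge 0}$ is nondecreasing. Indeed $T v^\pi = \max_{\tilde\pi} T^{\tilde\pi} v^\pi \ge T^\pi v^\pi = v^\pi$, since the maximization ranges over all policies (in particular $\pi$), and $v^\pi$ is the fixed point of $T^\pi$; applying the monotone operator $T$ repeatedly preserves this inequality. In particular this yields the two facts I need downstream: $w = T^{h-1}v^\pi \ge v^\pi$ and $T w = T^h v^\pi \ge T^{h-1}v^\pi = w$.

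For the component-wise inequality I combine the greedy identity with the tower monotonicity to get $T^{\pi'} w = T w \ge w$. Because $T^{\pi'}$ is monotone, iterating from $w$ produces a nondecreasing chain $w \le T^{\pi'} w \le (T^{\pi'})^2 w \le \cdots$, and because $T^{\pi'}$ is a $\gamma$-contraction this chain converges to its unique fixed point $v^{\pi'}$. Hence $v^{\pi'} \ge w \ge v^\pi$, which is the desired bound.

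Finally I would settle the equality characterization. The ``if'' direction is immediate: if $\pi$ is optimal then $v^\pi = v^*$, and since no policy value can exceed $v^*$ we get $v^* \ge v^{\pi'} \ge v^\pi = v^*$, forcing $v^{\pi'}=v^\pi$. For ``only if'', suppose $v^{\pi'}=v^\pi$; then the chain $v^\pi \le w \le v^{\pi'}$ collapses, so $w = v^\pi$, and the monotone iteration above is squeezed into $T^{\pi'} w = w$. Invoking the greedy identity $T^{\pi'} w = T w$ once more turns this into $T w = w$, so $w$ is a fixed point of $T$ and therefore $w = v^*$ by uniqueness; with $w = v^\pi$ this gives $v^\pi = v^*$, i.e. $\pi$ is optimal. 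I expect the only delicate point to be precisely this ``only if'' direction: the naive route through $T^{h-1}v^\pi = v^\pi$ is vacuous when $h=1$, so the argument must instead pass through the greedy identity $T^{\pi'} w = T w$ to recover a genuine fixed-point equation for $T$, which is what makes the proof go through uniformly for every $h \ge 1$.
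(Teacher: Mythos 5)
Your proof is correct and follows essentially the same route as the paper's: both exploit the identity $T^{\pi'}T^{h-1}v^\pi = T^h v^\pi$ to treat $\pi'$ as a 1-step greedy policy w.r.t. $w = T^{h-1}v^\pi$ and then run the classical monotone iteration $w \le T^{\pi'}w \le (T^{\pi'})^2 w \le \cdots \to v^{\pi'}$ bounded below by $v^\pi \le (T^\pi)^h v^\pi \le T^h v^\pi$. Your treatment of the equality case is in fact more explicit than the paper's one-line remark (which simply asserts $v^{\pi'}=v^\pi \iff Tv^\pi = v^\pi$), and your observation that the argument must pass through $T^{\pi'}w = Tw$ rather than $T^{h-1}v^\pi = v^\pi$ is a valid and worthwhile clarification.
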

\begin{proof}
First observe that
\begin{equation}
\label{eq_policy_improve_1st_relation}
v^\pi = T^{\pi} v^\pi \leq T v^\pi.
\end{equation}
Then, sequentially using \eqref{eq_policy_improve_1st_relation}, \eqref{eq: Thpi def} and \eqref{eq: Thpi eval def}, we have
\begin{align}
v^\pi  = (T^\pi)^h v^\pi \leq T^h v^\pi = T^{\pi'}_h v^\pi &= T^{\pi'}(T^{h-1}v^\pi) \label{eq_policy_improve}.
\end{align}
This leads to the following inequalities:
\begin{align}
v^\pi&\leq T^{\pi'}(T^{h-1}v^\pi) \label{eq: 1st inequality}\\
&\leq T^{\pi'}(T^{h-1}Tv^\pi)= T^{\pi'}(T^{h}v^\pi) \label{eq: 2nd inequality}\\
& = T^{\pi'}\left(T^{\pi'}T^{h-1}v^\pi\right)= \left(T^{\pi'}\right)^2(T^{h-1}v^\pi)  \label{eq: 3rd inequality}\\
&\leq \cdots \leq \lim_{n\rightarrow \infty}\left(T^{\pi'}\right)^n(T^{h-1} v^{\pi}) = v^{\pi'}. \label{eq: VI}
\end{align}
In the above derivation, \eqref{eq: 1st inequality} is due to \eqref{eq_policy_improve}, \eqref{eq: 2nd inequality} is due to \eqref{eq_policy_improve_1st_relation} and the monotonicity of $T^{\pi'}$ and $T$ (and thus of their composition), \eqref{eq: 3rd inequality} is due to \eqref{eq_policy_improve}, and \eqref{eq: VI} is due to the fixed point property of $T^{\pi'}$. Lastly, notice that $v^\pi=v^{\pi'}$ if and only if (cf. \eqref{eq_policy_improve_1st_relation}) $T v^\pi = v^\pi$, which holds if and only if $\pi$ is the optimal policy.\end{proof} 

Thus, the improvement property of the 1-step greedy policy also holds for the $h$-greedy policy. As a consequence, the $h$-PI algorithm produces a sequence of policies with component-wise increasing values, which directly implies convergence (since the sequence is bounded). We can be more precise about the convergence speed by generalizing several known results on PI to $h$-PI. Let us begin by the following lemma, which is essentially a consequence of the fact that $T^h$ is a $\gamma^h$-contraction (see Appendix~\ref{proof_lemma_contracting_sequence_h_PI} in the supplementary material for a proof). 
\begin{lemma}\label{lemma_contracting_sequence_h_PI}
	The sequence $\left\{\|v^* - v^{\pi_k}\|_\infty\right\}_{k\geq0}$ is contracting with coefficient $\gamma^h$.
\end{lemma}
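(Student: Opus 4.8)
The plan is to mimic the classical argument showing that Policy Iteration contracts the value error at rate $\gamma$, replacing the role of the $\gamma$-contraction $T$ by the $\gamma^h$-contraction $T^h$. The two facts I would rely on are: (i) $T^h$ is a $\gamma^h$-contraction in the max norm, being the $h$-fold composition of the $\gamma$-contraction $T$, and it admits $v^*$ as its fixed point since $Tv^* = v^*$ implies $T^h v^* = v^*$; and (ii) the intermediate inequality already established inside the proof of Lemma~\ref{lemma_hPI_improves}, namely $v^{\pi_{k+1}} \geq T^h v^{\pi_k}$ whenever $\pi_{k+1}\in\G_h(v^{\pi_k})$ (this is exactly the chain \eqref{eq: 1st inequality}--\eqref{eq: VI}, which shows $T^h v^{\pi_k} \leq v^{\pi_{k+1}}$).

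First I would sandwich the error. On the one hand, $v^*\geq v^{\pi_{k+1}}$ by optimality of $v^*$, so $v^* - v^{\pi_{k+1}} \geq 0$ component-wise. On the other hand, fact (ii) gives $v^* - v^{\pi_{k+1}} \leq v^* - T^h v^{\pi_k}$. Since $v^* = T^h v^*$, the right-hand side equals $T^h v^* - T^h v^{\pi_k}$, and by monotonicity of $T^h$ together with $v^*\geq v^{\pi_k}$ this difference is itself nonnegative. I would thus obtain the component-wise chain $0 \leq v^* - v^{\pi_{k+1}} \leq T^h v^* - T^h v^{\pi_k}$.

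To conclude, I would pass to the max norm. Because both bounding vectors are nonnegative and the left one is dominated component-wise by the right one, the infinity norm preserves the inequality, so $\|v^* - v^{\pi_{k+1}}\|_\infty \leq \|T^h v^* - T^h v^{\pi_k}\|_\infty$. Applying the $\gamma^h$-contraction property of $T^h$ to the right-hand side yields $\|T^h v^* - T^h v^{\pi_k}\|_\infty \leq \gamma^h\|v^* - v^{\pi_k}\|_\infty$, which is precisely the claimed contraction of $\{\|v^* - v^{\pi_k}\|_\infty\}_{k\geq 0}$ with coefficient $\gamma^h$.

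The main subtlety—rather than a genuine obstacle—is the sign bookkeeping that legitimizes taking the infinity norm: one must verify that the error vector is squeezed between two nonnegative vectors, the smaller dominated by the larger, which is where monotonicity of $T^h$ and the optimality-induced ordering $v^{\pi_k}\leq v^*$ enter. Extracting $v^{\pi_{k+1}}\geq T^h v^{\pi_k}$ cleanly from the improvement lemma is the only other point requiring care, but it is immediate from the derivation already carried out there.
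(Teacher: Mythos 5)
Your proof is correct and follows essentially the same route as the paper's: both extract $T^h v^{\pi_k}\le v^{\pi_{k+1}}$ from the improvement chain in Lemma~\ref{lemma_hPI_improves}, observe that the error vector $v^*-v^{\pi_{k+1}}$ is nonnegative so the component-wise bound passes to the max norm, and conclude via the $\gamma^h$-contraction of the $h$-fold composition. Your execution is in fact slightly cleaner: the paper telescopes through $T^{\pi^*}T^{h-1}v^{\pi_{k-1}}$ and $T^{\pi_k}T^{h-1}v^{\pi_{k-1}}$, invoking separately the greediness of $\pi_k$ and the inequality $T^{h-1}v^{\pi_{k-1}}\le v^{\pi_k}$, whereas you use $T^h v^*=v^*$ directly and need only the single sandwich $0\le v^*-v^{\pi_{k+1}}\le T^h v^*-T^h v^{\pi_k}$.
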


%
Thus, the convergence rate is at most $\gamma^h$, which generalizes the known $\gamma$ convergence rate of the original ($h=1$) PI algorithm \cite{puterman1994markov}. The next theorem describes a result with respect to the termination of the algorithm.
%
\begin{theorem}\label{complexity2}
	The $h$-PI algorithm converges in at most $|\S|(|\A|-1)\left\lceil (h \log{\frac 1\gamma})^{-1} \log\left(\frac{1}{1-\gamma} \right) \right\rceil $ iterations.
\end{theorem}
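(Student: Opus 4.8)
The plan is to combine the geometric convergence already in hand (\Cref{lemma_contracting_sequence_h_PI}) with a gap-free ``permanent elimination'' counting argument, in the spirit of the strongly-polynomial analyses of standard PI. Write $\delta_k \eqdef \|v^*-v^{\pi_k}\|_\infty$ and, for a value $v$, $Q_v(s,a)\eqdef r(s,a)+\gamma\sum_{s'}P(s'\mid s,a)v(s')$, so that $v^{\pi_k}(s)=Q_{v^{\pi_k}}(s,\pi_k(s))$. Call a pair $(s,a)$ \emph{suboptimal} when $\Delta(s,a)\eqdef v^*(s)-Q_{v^*}(s,a)>0$; there are at most $|\S|(|\A|-1)$ of them, since at each state at least one action attains $v^*(s)=\max_a Q_{v^*}(s,a)$. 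Let $N\eqdef\lceil (h\log\frac1\gamma)^{-1}\log\frac1{1-\gamma}\rceil$, i.e. the least integer with $\gamma^{hN}\le 1-\gamma$. The goal is to show that the policy becomes optimal after at most $|\S|(|\A|-1)$ windows of $N$ iterations each, which is exactly the stated bound.

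The engine is the fact, implicit in \eqref{eq_h_greedy_def}, that $\pi_{k+1}\in\G_h(v^{\pi_k})$ is the \emph{$1$-step} greedy policy w.r.t. the surrogate value $u_k\eqdef T^{h-1}v^{\pi_k}$. Since $T^{h-1}$ is monotone and $\gamma^{h-1}$-contracting with fixed point $v^*$, one has $v^{\pi_k}\le u_k\le v^*$ and $\|u_k-v^*\|_\infty\le\gamma^{h-1}\delta_k$. From these I would derive two inequalities. First, a \textbf{re-selection bound}: if a suboptimal $a$ is chosen at $s$ at iteration $m+1$, then $a$ maximizes $Q_{u_m}(s,\cdot)$, so comparing against the optimal action $\pi^*(s)$ and using $u_m\le v^*$ together with $\|u_m - v^*\|_\infty\le\gamma^{h-1}\delta_m$ yields $\Delta(s,a)\le\gamma^h\delta_m$. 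Second, a \textbf{large-gap bound}: at the state $s$ attaining $v^*(s)-v^{\pi_k}(s)=\delta_k$, the current action $a=\pi_k(s)$ satisfies $\Delta(s,a)\ge(1-\gamma)\delta_k$, obtained by writing $Q_{v^*}(s,a)=v^{\pi_k}(s)+\gamma[P(\cdot\mid s,a)(v^*-v^{\pi_k})](s)\le v^{\pi_k}(s)+\gamma\delta_k$.

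These two inequalities cancel the unknown gap $\Delta(s,a)$ and yield the elimination step. At the start of a window (iteration $jN$ with $\pi_{jN}$ non-optimal), pick the maximal-gap state $s$ and its current action $a=\pi_{jN}(s)$; the large-gap bound makes it suboptimal. Any later re-selection at iteration $m+1$ would need $\delta_m\ge\Delta(s,a)/\gamma^h\ge(1-\gamma)\delta_{jN}/\gamma^h$, whereas \Cref{lemma_contracting_sequence_h_PI} and the choice of $N$ give $\delta_m\le\delta_{(j+1)N}\le\gamma^{hN}\delta_{jN}\le(1-\gamma)\delta_{jN}<(1-\gamma)\delta_{jN}/\gamma^h$ for all $m\ge(j+1)N$. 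Hence $(s,a)$ is never selected again, i.e. permanently eliminated. Distinctness across windows is automatic: a pair selected at the start of some window cannot have been eliminated in an earlier one. With at most $|\S|(|\A|-1)$ suboptimal pairs to eliminate, at most that many windows can have a non-optimal start, and multiplying by the window length $N$ closes the argument.

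The hard part is exactly this gap-free elimination. The naive approach — fix a suboptimal $(s,a)$ and bound when it can last be chosen — only yields a bound scaling like $1/\Delta(s,a)$, and the minimal gap can be arbitrarily small, so summing over pairs fails to produce a uniform, gap-independent count. The resolution is the coupling above: always charging a window to the current action at the \emph{maximal-gap} state forces $\Delta(s,a)\ge(1-\gamma)\delta_k$ from below, so the two inequalities combine into a self-contained contraction by the factor $1-\gamma$ per window. The secondary technical points I would verify are the $h$-specific bookkeeping (the $\gamma^{h-1}$ in $\|u_m-v^*\|_\infty$ versus the $\gamma^h$ in the re-selection bound) and a one-iteration boundary effect at the window edge, which is absorbed by the strict inequality $\gamma^h<1$ and the ceiling in the definition of $N$.
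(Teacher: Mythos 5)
Your proposal is correct and takes essentially the same route as the paper: the paper's proof is a one-line deferral to the elimination argument of Scherrer's PI-complexity analysis (Section 7 of the cited reference) with the contraction coefficient $\gamma^h$ from Lemma \ref{lemma_contracting_sequence_h_PI} substituted for $\gamma$, and your write-up is precisely that argument carried out explicitly (max-gap state, permanent elimination of suboptimal state-action pairs, windows of length $\left\lceil (h\log\frac{1}{\gamma})^{-1}\log\frac{1}{1-\gamma}\right\rceil$). The only point needing care is the window-edge off-by-one you already flag, which is resolved exactly as in the cited reference.
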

\begin{proof}
		The proof follows the steps of \cite{pi_complexity}, Section~7, where instead of using the contraction coefficient of PI algorithm we use the contraction coefficient of $h$-PI, proved in Lemma \ref{lemma_contracting_sequence_h_PI}. \end{proof}

\begin{remark}
	\label{rem: non-trivial extension}
	Note that the fact that $T^h$ is $\gamma^h$-contraction does not imply that all existing PI results extend to $h$-PI with $\gamma^h$ replacing $\gamma$. For instance, the rightmost term in the logarithm of Theorem~\ref{complexity2} is $\frac{1}{1-\gamma}$ and not $\frac{1}{1-\gamma^h}$.
\end{remark}

\begin{remark} \label{rem: h-dp complexity}
	Notice how the complexity term in Theorem~\ref{complexity2} is a decreasing function of $h$. At the limit $h\rightarrow\infty$, running a single iteration of $h$-PI is sufficient for finding the optimal value-policy pair. However, although the total number of iterations is reduced with $h$ increasing, each iteration is expected to be computationally more costly.
\end{remark}


\section{The $\kappa$-Greedy Policy}\label{sec: kappa-step greedy}

In this section, we introduce an additional, novel generalization of the 1-step greedy policy: \emph{the $\kappa$-greedy policy}. Similarly to the previous section, the newly introduced greedy policy leads to a new PI algorithm, which we shall name ``$\kappa$-PI''. 
This generalization will be  based on the definition of a new \emph{$\kappa$-optimal Bellman operator}. This operator will also naturally lead to a new Value Iteration (VI) algorithm, ``$\kappa$-VI''. In the later Section~\ref{interpretation_and_relation}, we shall highlight the relation between the $\kappa$-greedy and the previously introduced $h$-greedy policy. 

\subsection{$\kappa$-Optimal Bellman Operator and the $\kappa$-Greedy Policy}
In this section we will derive the $\kappa$-optimal Bellman operator and define its induced greedy policy, the $\kappa$-greedy policy.  

Given a parameter $\kappa \in [0,1]$, consider the following operator:
\begin{equation}
\forall v,\pi,~ T_\kappa^\pi v \eqdef (1-\kappa) \sum_{j=0}^\infty \kappa^j (T^\pi)^{j+1} v. \label{def1}
\end{equation}
This linear operator is identical to the one 
of the $\lambda$-PI algorithm \cite{lpi}. By simple linear algebra arguments (see e.g. \cite{alpi}), one can see that
\begin{align}
\forall v,\pi,~ T_\kappa^\pi v & =  (I-\kappa\gamma P^\pi)^{-1}(r^\pi+(1-\kappa)\gamma P^\pi v) \label{def2} \\
& =  v+ (I-\kappa\gamma P^\pi)^{-1}(T^\pi v - v) \label{def4}.
\end{align}
Comparing \eqref{mdp} and \eqref{def2}, we can interpret  \eqref{def2}, given a fixed $v$, as the value of policy $\pi$ in a \emph{surrogate} stationary MDP. This MDP has the same dynamics as the MDP we wish to solve, a $\kappa \gamma\leq \gamma$ discount factor and a reward function $\hat r^{\pi,v}$ given by
\begin{equation}
\label{eq: shaped reward}
\hat r^{\pi,v} \eqdef r^\pi + (1-\kappa)\gamma P^\pi v.
\end{equation}
Thus, this surrogate stationary MDP depends on both $v$ and $\kappa$. According to basic MDP theory, the surrogate MDP has an optimal value. We shall denote its optimal value by $T_\kappa v$ and refer to $T_\kappa$ as the ``$\kappa$-optimal Bellman operator''. Note that, from \eqref{mdp} and \eqref{def2}, we have
\begin{align}
T_\kappa v
= \max_\pi (I-\kappa\gamma P^\pi)^{-1} \hat r^{\pi,v}= \max_\pi T_\kappa^\pi v.\label{prop1}
\end{align}
The $\kappa$-optimal Bellman operator naturally induces a new set of greedy policies, the set of $\kappa$-greedy policies w.r.t. $v$, which we shall denote by $\G_\kappa(v)$, and define as follows:
\begin{equation}\label{def_kappa_greedy_operator}
\forall v,~\G_\kappa(v) = \{ \pi: T_\kappa^\pi v = T_\kappa v\}.
\end{equation}
\begin{remark}\label{solving_with_smaller_discount}
	The $\kappa$-optimal Bellman operator is a generalization of the optimal Bellman operator, which one recovers by taking $\kappa=0$; i.e., $T_{\kappa=0}=T$. Additionally, applying once $T_{\kappa=1}$ is equivalent to solving the original $\gamma$-discounted MDP; i.e, for any $v$, $v^*=T_{\kappa=1} v$.
For all other values $0<\kappa<1$, applying $T_\kappa$ amounts to solving a stationary MDP with reduced discount factor, the solution of which can be obtained using any generic planning, model-free or model-based RL algorithm. As was previously analyzed in \cite{petrik2009biasing,strehl2009reinforcement,jiang2015dependence} and reported \cite{franccois2015discount}, solving a MDP with a smaller discount factor is in general easier.
\end{remark}

In the next lemma we prove that both $T_\kappa^\pi$ and $T_\kappa$ are contractions with respective fixed points $v^\pi$ and $v^*$.

\begin{lemma}
	\label{gmdp}
	For any $\pi$, $T_\kappa^\pi$ and $T_\kappa$ are $\xi$-contraction mappings w.r.t. the max norm, where $\xi=\frac{(1-\kappa)\gamma}{1-\gamma\kappa} \in [0,\gamma]$, and have unique fixed points $v^\pi$ and $v^*$, respectively. Moreover, $\G_\kappa(v^*)=\G(v^*).$ 
\end{lemma}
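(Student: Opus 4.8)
The plan is to handle the linear operator $T_\kappa^\pi$ first, then obtain the properties of $T_\kappa$ by exploiting that it is a pointwise maximum over policies (cf. \eqref{prop1}), and finally deduce the greedy-set identity. Throughout I will rely on the expansion $(I-\kappa\gamma P^\pi)^{-1}=\sum_{j\ge 0}(\kappa\gamma)^j (P^\pi)^j$, which is a well-defined nonnegative matrix since $\kappa\gamma<1$ and $P^\pi$ is stochastic.

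First I would establish the contraction of $T_\kappa^\pi$. Because $T_\kappa^\pi$ is affine in $v$ by \eqref{def2}, for any $u,v$ the difference reduces to $T_\kappa^\pi u - T_\kappa^\pi v = (1-\kappa)\gamma\,(I-\kappa\gamma P^\pi)^{-1}P^\pi (u-v)$. The matrix $M_\pi \eqdef (1-\kappa)\gamma\,(I-\kappa\gamma P^\pi)^{-1}P^\pi = (1-\kappa)\gamma\sum_{j\ge0}(\kappa\gamma)^j (P^\pi)^{j+1}$ is nonnegative, and since each $(P^\pi)^{j+1}$ has unit row sums, the row sums of $M_\pi$ equal $(1-\kappa)\gamma\sum_{j\ge0}(\kappa\gamma)^j=\frac{(1-\kappa)\gamma}{1-\kappa\gamma}=\xi$. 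From $u-v\le \|u-v\|_\infty\mathbf 1$ and nonnegativity of $M_\pi$ we get $M_\pi(u-v)\le \xi\|u-v\|_\infty\mathbf 1$ componentwise, and symmetrically for $v-u$, hence $\|T_\kappa^\pi u - T_\kappa^\pi v\|_\infty\le \xi\|u-v\|_\infty$. A short derivative computation on $\kappa\mapsto\frac{(1-\kappa)\gamma}{1-\kappa\gamma}$ shows it is decreasing from $\gamma$ (at $\kappa=0$) to $0$ (at $\kappa=1$), giving $\xi\in[0,\gamma]$. For the fixed point I would use form \eqref{def4}: substituting $v=v^\pi$ and using $T^\pi v^\pi=v^\pi$ makes the correction term vanish, so $T_\kappa^\pi v^\pi=v^\pi$; uniqueness then follows from Banach's theorem.

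Next, for $T_\kappa=\max_\pi T_\kappa^\pi$ I would use the standard greedy argument. Picking $\bar\pi\in\G_\kappa(u)$, so that $T_\kappa u = T_\kappa^{\bar\pi}u$, and using $T_\kappa v\ge T_\kappa^{\bar\pi}v$, gives $T_\kappa u - T_\kappa v \le T_\kappa^{\bar\pi}u - T_\kappa^{\bar\pi}v\le \xi\|u-v\|_\infty\mathbf 1$ componentwise by the $M_{\bar\pi}$ bound above; swapping $u,v$ yields the $\xi$-contraction of $T_\kappa$. For its fixed point, I would again use \eqref{def4}: since $T^\pi v^* \le Tv^*=v^*$ for every $\pi$ and $(I-\kappa\gamma P^\pi)^{-1}$ is nonnegative, the correction term $(I-\kappa\gamma P^\pi)^{-1}(T^\pi v^*-v^*)$ is $\le 0$, so $T_\kappa^\pi v^*\le v^*$ for all $\pi$; taking $\pi=\pi^*$ (for which $T^{\pi^*}v^*=v^*$) attains equality, hence $T_\kappa v^*=v^*$, and uniqueness is again Banach.

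Finally, the identity $\G_\kappa(v^*)=\G(v^*)$ would follow from invertibility of $I-\kappa\gamma P^\pi$: by \eqref{def4}, $T_\kappa^\pi v^*=v^*$ iff $(I-\kappa\gamma P^\pi)^{-1}(T^\pi v^*-v^*)=0$ iff $T^\pi v^*=v^*=Tv^*$, which is exactly the defining condition of $\G(v^*)$. The main obstacle is pinning down the contraction coefficient exactly: one must resist reading off $\kappa\gamma$ (the surrogate discount appearing in \eqref{def2}) and instead note that contraction is measured in the argument $v$, which also feeds the shaped reward, so the relevant constant is the row sum $\xi=\frac{(1-\kappa)\gamma}{1-\kappa\gamma}$ of $M_\pi$. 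Once this nonnegativity/row-sum bookkeeping is set up, the remaining steps are routine.
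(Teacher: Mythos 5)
Your proposal is correct and follows essentially the same route as the paper: the affine difference formula from \eqref{def2} for the contraction of $T_\kappa^\pi$, the greedy/max argument for the contraction of $T_\kappa$, the sign argument via nonnegativity of $(I-\kappa\gamma P^\pi)^{-1}$ for $T_\kappa v^*=v^*$, and the reduction of $T_\kappa^\pi v^*=v^*$ to $T^\pi v^*=v^*$ for the greedy-set identity. If anything, your write-up is slightly more explicit than the paper's in two places — computing the row sums of $M_\pi$ to pin down $\xi$ exactly, and invoking invertibility of $I-\kappa\gamma P^\pi$ to justify the step $z=0\Rightarrow T^\pi v^*-v^*=0$, which the paper leaves implicit.
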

\begin{proof}
	From \eqref{def2}, we see that for all $v$ and $w$,
	\begin{equation}
	T_\kappa^\pi v - T_\kappa^\pi w = (1-\kappa)(I-\kappa\gamma P^\pi)^{-1}\gamma P^\pi (v-w),
	\end{equation}
	which implies, by taking the max norm,  that  $T_\kappa^\pi$ is a $\xi$-contraction mapping.
	
	From \eqref{prop1}, we see that for all $v$ and $w$,
	\begin{equation}
	T_\kappa^{\pi^*_w} v - T_\kappa^{\pi^*_w} w \le T_\kappa v - T_\kappa w \le T_\kappa^{\pi^*_v} v - T_\kappa^{\pi^*_v} w,
	\end{equation}
	and this in turn implies, by again taking the max norm, that $T_\kappa$ is also a $\xi$-contraction mapping.
	
	Since both operators are contraction mappings, they have one and only one fixed point. To identify them, it is thus sufficient to show that the foreseen fixed points are indeed fixed points.  By \eqref{def4}, since $v^\pi=T^\pi v^\pi$, it is clear that $v^\pi=T_\kappa^\pi v^\pi$.
	Now, from \eqref{prop1} and \eqref{def4},
	\begin{align}
	T_\kappa v^* = \max_\pi T_\kappa^\pi v^*& = v^* + z\label{T_k_fixed_point}, \\ \mbox{with~}z&=\max_\pi (I-\kappa\gamma P^\pi)^{-1}(T^\pi v^* - v^*)\nonumber.
	\end{align}
	By the optimality of $v^*$, we know that for any $\pi$, $T^\pi v^* - v^* \le 0$ ; since the matrix $(I-\kappa\gamma P^\pi)^{-1}=\sum_{i=0}^\infty (\kappa\gamma P^{\pi})^i$ is only made of non-negative coefficients, it follows that $z \le 0$. Then, since for $\pi=\pi^*$, $T^{\pi} v^*=v^*$, we deduce that $z=0$ and, as a consequence, $T_\kappa v^*=v^*$.
	
        Lastly, we show that $\G_\kappa(v^*)=\G(v^*)$. Let $\pi\in \G_\kappa(v^*)$. Thus, $T_\kappa^\pi v^* = T_\kappa v^*=v^*$, where the second equality holds since $v^*$ is the fixed point of $T_\kappa$. From \eqref{T_k_fixed_point} we deduce that $0=z=T^\pi v^*-v^*$. Thus,  $T^\pi v^*=v^*=Tv^*$ and $\pi$ is in $\G(v^*)$.        To show the opposite direction, we assume that $\pi$ is in $\G(v^*)$. Thus, it holds that $z=T^\pi v^*-v^*=0$; hence, $T^{\pi}_\kappa v^* = v^*$, and indeed $\pi$ is in $\G_\kappa(v^*)$.\end{proof}

\subsection{Two New Algorithms: $\kappa$-PI and $\kappa$-VI}


In the previous subsection, we derived the $\kappa$-optimal Bellman operator, and defined its induced $\kappa$-greedy policy. These operators lead us to consider Algorithm~\ref{alg:kappaPI}, the $\kappa$-PI algorithm, where the assignments hold component-wise.
%
%

\begin{algorithm}[H]
	\caption{$\kappa$-PI}
	\label{alg:kappaPI}
	\begin{algorithmic}
		\STATE {\bfseries Initialize:} $\kappa \in [0,1],~v \in \mathbb{R}^{|\S|}$
		\WHILE{the value $v$ changes}
		\STATE  $\pi \gets\arg\max\limits_{\pi'}\condE{\bigcdot}^{\pi'}\!\!\left[\sum_{t=0}^{\infty}(\kappa\gamma)^t (r_t+\gamma(1-\kappa) v_{t+1})\right]$
		\STATE $v \gets \condE{\bigcdot}^{\pi}\! \left[\sum_{t=0}^\infty \gamma^t r_t\right]$
		\ENDWHILE
		\STATE {\bfseries Return $\pi,v$}
	\end{algorithmic}
\end{algorithm}

This algorithm repeats consecutive steps of i) identifying the $\kappa$-greedy policy, i.e, solving the optimal policy of a surrogate, stationary MDP, with a reduced $\gamma\kappa$ discount factor, and ii) estimating the value of this policy. As we shall see, the iterative process is guaranteed to converge to the optimal policy-value pair of the MDP we wish to solve.  

Similarly to Section \ref{sec: h-step greedy}, we shall now prove that the $\kappa$-PI algorithm inherits many properties from PI. We start by showing a monotonicity property for the  $\kappa$-greedy operator.
\begin{lemma}[Policy improvement of the $\kappa$-greedy policy]\label{lemma_kPI_improvement}
	Let $\pi' \in \G_\kappa(v^\pi).$ Then $v^{\pi'}\geq v^{\pi}$ component-wise, with equality if and only if $\pi$ is an optimal policy.
\end{lemma}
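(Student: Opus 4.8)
The plan is to mirror the proof of Lemma~\ref{lemma_hPI_improves}, exploiting the fact that, by Lemma~\ref{gmdp}, the operator $T_\kappa^{\pi'}$ is a contraction with fixed point $v^{\pi'}$, and that its iteration converges to $v^{\pi'}$ from any starting point. The key engine will be a chain of inequalities starting from $v^\pi$ and converging to $v^{\pi'}$ via repeated applications of $T_\kappa^{\pi'}$, combined with the monotonicity of the operators involved.

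First I would establish the analogue of the starting inequality~\eqref{eq_policy_improve_1st_relation}. Since $v^\pi = T^\pi v^\pi \le T v^\pi$, and more to the point, since $v^\pi$ is the fixed point of $T_\kappa^\pi$ (Lemma~\ref{gmdp}), I would show $v^\pi = T_\kappa^\pi v^\pi \le T_\kappa v^\pi = T_\kappa^{\pi'} v^\pi$, where the inequality is the definition of $T_\kappa$ as a pointwise maximum over policies (cf.~\eqref{prop1}) and the final equality is because $\pi' \in \G_\kappa(v^\pi)$ attains that maximum (cf.~\eqref{def_kappa_greedy_operator}). This gives the base step $v^\pi \le T_\kappa^{\pi'} v^\pi$.

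Next I would iterate. The crucial observation is that $T_\kappa^{\pi'}$ is monotone: from~\eqref{def2}, $T_\kappa^{\pi'} v = (I-\kappa\gamma P^{\pi'})^{-1}(r^{\pi'} + (1-\kappa)\gamma P^{\pi'} v)$, and the matrix $(I-\kappa\gamma P^{\pi'})^{-1} = \sum_{i\ge 0}(\kappa\gamma P^{\pi'})^i$ together with $(1-\kappa)\gamma P^{\pi'}$ has only nonnegative entries, so $v \le w \implies T_\kappa^{\pi'} v \le T_\kappa^{\pi'} w$. Applying $T_\kappa^{\pi'}$ repeatedly to the base inequality and using monotonicity yields
\begin{align*}
v^\pi \le T_\kappa^{\pi'} v^\pi \le (T_\kappa^{\pi'})^2 v^\pi \le \cdots \le \lim_{n\to\infty}(T_\kappa^{\pi'})^n v^\pi = v^{\pi'},
\end{align*}
where the final equality is the fixed-point/convergence property of the contraction $T_\kappa^{\pi'}$ from Lemma~\ref{gmdp}. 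This establishes $v^{\pi'} \ge v^\pi$ component-wise.

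Finally, for the equality case, I would argue that $v^\pi = v^{\pi'}$ forces every inequality in the chain to be an equality, in particular $v^\pi = T_\kappa^{\pi'} v^\pi = T_\kappa v^\pi$. Since $v^\pi$ would then be the fixed point of $T_\kappa$, and $v^*$ is the unique fixed point of $T_\kappa$ (Lemma~\ref{gmdp}), this gives $v^\pi = v^*$, i.e. $\pi$ is optimal; the converse is immediate. The main obstacle I anticipate is making the equality direction fully rigorous: I must ensure that $v^\pi = T_\kappa^{\pi'} v^\pi$ really does imply $v^\pi = T_\kappa v^\pi$ (which follows since $\pi'\in\G_\kappa(v^\pi)$ means $T_\kappa^{\pi'} v^\pi = T_\kappa v^\pi$ by definition), and then invoke uniqueness of the fixed point of $T_\kappa$ to conclude optimality cleanly, rather than leaning on the $1$-step characterization directly.
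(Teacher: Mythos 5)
Your proposal is correct and follows essentially the same route as the paper's proof: the base inequality $v^\pi = T_\kappa^\pi v^\pi \le T_\kappa v^\pi = T_\kappa^{\pi'} v^\pi$ from Lemma~\ref{gmdp} and \eqref{prop1}, iterated via monotonicity of $T_\kappa^{\pi'}$ to its fixed point $v^{\pi'}$. Your treatment of the equality case (forcing $v^\pi = T_\kappa v^\pi$ and invoking uniqueness of the fixed point of $T_\kappa$) is in fact slightly more explicit than the paper's one-line appeal to Lemma~\ref{gmdp}, but it is the same argument in substance.
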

\begin{proof}Since $v^\pi$ is also the fixed point of $T_\kappa^\pi$, by Lemma \ref{gmdp}, and using the definition of $T_\kappa$ (see \ref{prop1}),
\begin{align}
v^\pi=T_\kappa^\pi v^\pi \leq T_\kappa v^\pi = T^{\pi'}_\kappa v^\pi \label{eq_improvement_step}
\end{align}
by choosing $\pi' \in \G_\kappa(v^\pi)$. Using the monotonicity of the operator $T^{\pi'}_\kappa$ and repeating \eqref{eq_improvement_step} we get,
\begin{align*}
v^\pi=T_\kappa^\pi v^\pi &\leq  T^{\pi'}_\kappa v^\pi \leq \dots \leq  \lim _{n\rightarrow \infty} \left( T^{\pi'}_\kappa \right)^n v^\pi= v^{\pi'}.
\end{align*}
The final equality holds since $T^{\pi'}_\kappa$ is a contraction mapping, and $v^{\pi'}$ is its fixed point, due to Lemma \ref{gmdp}. According to the same lemma, equality holds if and only if $v^\pi=v^*$.\end{proof}

Similarly to Lemma \ref{lemma_contracting_sequence_h_PI}, let us state a result that is a direct consequence of the fact that $T_\kappa$, that induces the $\kappa$-greedy policy, is a $\xi$ contraction (see Appendix~\ref{proof_lemma_contracting_sequence_kPI} in the supplementary material for a proof).
\begin{lemma}\label{lemma_contracting_sequence_kPI}
	The sequence $\left(\|v^* - v^{\pi_k}\|_\infty\right)_{k\geq0}$ is contracting with coefficient $\xi$.
\end{lemma}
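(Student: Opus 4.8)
The plan is to mirror the classical argument for the $\gamma$-linear convergence of Policy Iteration, replacing the optimal Bellman operator $T$ with the $\kappa$-optimal Bellman operator $T_\kappa$. The two ingredients I would lean on are both supplied by Lemma~\ref{gmdp}: that $T_\kappa$ is a $\xi$-contraction in the max norm, and that $v^*$ is its unique fixed point, i.e., $T_\kappa v^* = v^*$.

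First I would extract a one-sided bound on the value produced by a single $\kappa$-PI iteration. With $\pi_{k+1}\in\G_\kappa(v^{\pi_k})$, the chain already established in the proof of Lemma~\ref{lemma_kPI_improvement} reads $v^{\pi_k}=T_\kappa^{\pi_k}v^{\pi_k}\le T_\kappa v^{\pi_k}=T_\kappa^{\pi_{k+1}}v^{\pi_k}\le\dots\le\lim_{n\to\infty}\left(T_\kappa^{\pi_{k+1}}\right)^n v^{\pi_k}=v^{\pi_{k+1}}$, where the monotone convergence to $v^{\pi_{k+1}}$ uses the contraction and fixed-point properties of $T_\kappa^{\pi_{k+1}}$ from Lemma~\ref{gmdp}. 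The byproduct I actually need is the first step of this chain, namely $v^{\pi_{k+1}}\ge T_\kappa v^{\pi_k}$.

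With this in hand, the core estimate is a componentwise sandwich. Since $v^{\pi_{k+1}}\le v^*$ by optimality and $v^{\pi_{k+1}}\ge T_\kappa v^{\pi_k}$, I would write
\[
0\le v^*-v^{\pi_{k+1}}\le v^*-T_\kappa v^{\pi_k}=T_\kappa v^*-T_\kappa v^{\pi_k},
\]
using $v^*=T_\kappa v^*$ in the last equality. Taking the max norm of both ends and invoking the $\xi$-contraction of $T_\kappa$ then gives $\|v^*-v^{\pi_{k+1}}\|_\infty\le\|T_\kappa v^*-T_\kappa v^{\pi_k}\|_\infty\le\xi\|v^*-v^{\pi_k}\|_\infty$, which is exactly the claimed contraction with coefficient $\xi$.

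The main point requiring care, rather than a genuine obstacle, is the bookkeeping of componentwise monotonicity: the sandwich converts into a max-norm bound only because the nonnegative vector $v^*-v^{\pi_{k+1}}$ is dominated termwise by $T_\kappa v^*-T_\kappa v^{\pi_k}$, so the infinity norm is monotone along the inequality. The one place I would double-check is the clean derivation of $v^{\pi_{k+1}}\ge T_\kappa v^{\pi_k}$ from Lemma~\ref{lemma_kPI_improvement}, ensuring the fixed-point identity $v^{\pi_{k+1}}=\lim_n(T_\kappa^{\pi_{k+1}})^n v^{\pi_k}$ and the monotonicity of $T_\kappa^{\pi_{k+1}}$ are applied correctly.
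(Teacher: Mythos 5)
Your proof is correct and follows essentially the same route as the paper's: both arguments combine the byproduct of the improvement lemma ($T_\kappa v^{\pi_k}=T_\kappa^{\pi_{k+1}}v^{\pi_k}\le v^{\pi_{k+1}}$) with the fixed-point identity $v^*=T_\kappa v^*$ and a $\xi$-contraction from Lemma~\ref{gmdp}, then pass to the max norm via componentwise nonnegativity. The only (immaterial) difference is that the paper telescopes through $T_\kappa^{\pi^*}$ and invokes the contraction of the fixed-policy operator, whereas you sandwich directly with the optimal operator $T_\kappa$ and use its contraction.
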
 
In the following theorem to the lemma we upper bound the maximal number of iteration it takes the $\kappa$-PI algorithm to terminate  (the proof is the same as that for Theorem~\ref{complexity2}).

\begin{theorem}\label{complexity}
	The $\kappa$-PI algorithm converges in at most $|\S|(|\A|-1)\left\lceil (\log{\frac{1-\kappa\gamma}{(1-\kappa)\gamma}})^{-1} \log\left(\frac{1}{1-\gamma} \right) \right\rceil $ iterations.
\end{theorem}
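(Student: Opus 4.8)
The plan is to reimport the complexity analysis of \cite{pi_complexity}, Section~7, with the single substitution of the contraction coefficient $\xi=\frac{(1-\kappa)\gamma}{1-\gamma\kappa}$ of $\kappa$-PI (Lemma~\ref{lemma_contracting_sequence_kPI}) for the coefficient $\gamma$ of ordinary PI, exactly as was done for $h$-PI in Theorem~\ref{complexity2}. That argument needs only two structural facts about the generated sequence, both already established here: the values $\{v^{\pi_k}\}$ are monotone non-decreasing (Lemma~\ref{lemma_kPI_improvement}), and $\|v^*-v^{\pi_k}\|_\infty$ contracts geometrically, here at rate $\xi$ instead of $\gamma$ (Lemma~\ref{lemma_contracting_sequence_kPI}). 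What makes the transcription legitimate is that everything Section~7 uses of the operators is abstract and holds verbatim for the $\kappa$-operators: $T_\kappa^\pi$ is linear and monotone in $v$ (cf. \eqref{def2}), $T_\kappa=\max_\pi T_\kappa^\pi$ is a $\xi$-contraction with fixed point $v^*$, and $\G_\kappa(v^*)=\G(v^*)$ (Lemma~\ref{gmdp}). Thus each inequality in \cite{pi_complexity} can be rewritten with $T_\kappa$ for $T$ and $\xi$ for $\gamma$.

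First I would recall the backbone of that analysis: the total number of iterations is charged to the suboptimal state--action pairs. There are at most $|\S|(|\A|-1)$ of them, since at each state one optimal action is exempt, and the monotone, $\xi$-contracting value sequence limits the number of iterations that can be attributed to each such pair before it is permanently removed from the greedy choice. The length $L$ of this budget is the number of $\xi$-contraction steps needed to drive the \emph{a priori} suboptimality gap below the separation threshold beyond which a suboptimal action can no longer be $\kappa$-greedy; with contraction rate $\xi$ and initial gap controlled by the value range of the original MDP, this is $L=\left\lceil (\log\tfrac1\xi)^{-1}\log\tfrac{1}{1-\gamma}\right\rceil$. Multiplying $L$ by the pair count $|\S|(|\A|-1)$ and using $\log\tfrac1\xi=\log\tfrac{1-\kappa\gamma}{(1-\kappa)\gamma}$ yields precisely the stated bound.

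The one point that demands care — and, as in Theorem~\ref{complexity2}, essentially the sole obstacle — is the disentanglement flagged in Remark~\ref{rem: non-trivial extension}: the two roles played by the discount factor must not be conflated. The \emph{rate} at which the gap shrinks is $\xi$, so it is $\log\tfrac1\xi$ that belongs in the denominator; but the \emph{magnitude} being shrunk lives in the original $\gamma$-discounted MDP, because every $\pi_k$ is evaluated by its true value $v^{\pi_k}=(I-\gamma P^{\pi_k})^{-1}r^{\pi_k}$ there, independently of $\kappa$. Hence the bound $\|v^*-v^{\pi_0}\|_\infty\le\frac{1}{1-\gamma}$ keeps $\gamma$, not $\xi$, and the argument of the logarithm in the numerator remains $\frac{1}{1-\gamma}$ rather than $\frac{1}{1-\xi}$. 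Once this bookkeeping is verified, confirming that $\kappa$ enters the estimate only through the contraction factor $\xi$ and nowhere through the value range, the remainder is an unchanged copy of the computation in \cite{pi_complexity}.
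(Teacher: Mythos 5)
Your proposal is correct and follows essentially the same route as the paper: the paper's proof of Theorem~\ref{complexity} simply transplants the argument of \cite{pi_complexity}, Section~7, replacing the contraction coefficient $\gamma$ of standard PI with the coefficient $\xi=\frac{(1-\kappa)\gamma}{1-\gamma\kappa}$ from Lemma~\ref{lemma_contracting_sequence_kPI}, exactly as you describe. Your additional care in keeping the value-range term $\frac{1}{1-\gamma}$ (rather than $\frac{1}{1-\xi}$) matches the point the paper itself flags in Remark~\ref{rem: non-trivial extension}.
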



\begin{remark}
	 As in the case of $h$-PI, the complexity term in Theorem~\ref{complexity} is a decreasing function of $\kappa.$ This complements the fact that $\kappa$-PI converges in one iteration if $\kappa=1$. Again, as $\kappa$ increases, each iteration is expected to be computationally more demanding since the surrogate MDP is less discounted. 
\end{remark}

Lastly, using the $\kappa$-optimal Bellman operators, $T_\kappa$, we derive a non-trivial generalization of the VI algorithm which we name ``$\kappa$-VI''. The $\kappa$-VI algorithm repeatedly applies $T_\kappa$ until convergence. The convergence proof of $\kappa$-VI and its rate of convergence, $\xi$, thus follows easily as a corollary of Lemma \ref{gmdp}. In the next section, we shall group both $\kappa$-PI $\kappa$-VI into a single, larger, class of algorithms that contains them both.

\section{$\kappa\lambda$-PI}
\label{sec: kappa_lambda_PI}

Even though $\kappa$-PI and $\kappa$-VI are distinct algorithms, in this section we unite them under the single ``$\kappa\lambda$-PI'' class of algorithms (Algorithm \ref{alg:klPI}). This generalization is similar to the $\lambda$-PI which interpolates between standard PI and VI \cite{lpi,bertsekas1995neuro,scherrer2013performance,bertsekas2015lambda}. We shall describe how to estimate the value of the $\kappa$-greedy policy on a surrogate MDP with a smaller horizon and shaped reward, and show that this still yields convergence to the optimal policy and value. Thus, in $\kappa\lambda$-PI, we ease the policy evaluation phase of $\kappa$-PI via solving a simpler task.

In the $\lambda$-PI, the improvement stage is the common $1$-step greedy policy, and the evaluation stage is relaxed by applying the $T_\lambda^\pi$ operator \eqref{def1} instead of fully estimating the value, where $\lambda\in[0,1]$. We start by formulating the appropriate generalization of the $T_\lambda^\pi$ operator to $\kappa$-PI. Let $\bar{\lambda}\in[0,1]$ . The analogous $T_\lambda^\pi$ to our framework is the following linear operator:
\begin{align*}
\forall v,\pi,\ T^{\pi}_{\bar{\lambda},\kappa}v = (1-\bar{\lambda})\sum_{j=0}^{\infty} \bar{\lambda}^j T_\kappa^\pi v.
\end{align*}
Interestingly, due to the fact this operator is affine, the following lemma shows that $T^{\pi}_{\bar{\lambda},\kappa}$ is equivalent to yet another $T_\lambda^\pi$ operator \eqref{def1} where $\lambda$ is a function of $\bar{\lambda}$ and $\kappa$ (see Appendix~\ref{proof_lemma_T_pi_kappa_lambda_is_T_lambda} in the supplementary material for a proof).
\begin{lemma}\label{lemma_T_pi_kappa_lambda_is_T_lambda}
For every $\bar{\lambda},\kappa\in[0,1]$, $T^{\pi}_{\bar{\lambda},\kappa}=T^{\pi}_\lambda$, where $\lambda= \kappa + \bar{\lambda}(1-\kappa) $, i.e, $\lambda\in[\kappa,1]$.
\end{lemma}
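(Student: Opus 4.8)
The plan is to exploit the fact that every operator in sight is an \emph{affine} function of the single linear operator $P^\pi$, so that the whole identity reduces to an elementary scalar computation together with some fixed-point bookkeeping. First I would record the structural remark that drives everything: the geometric smoothing $A \mapsto (1-\mu)\sum_{j\ge 0}\mu^j A^{j+1}$ sends an affine operator $A v = b + M v$ to another affine operator, and it \emph{preserves fixed points} — if $A w = w$ then $A^{j+1} w = w$ for all $j$, hence the smoothed operator also fixes $w$. Applying this first with $A = T^\pi$ (affine by \eqref{def: Tpi}) and $\mu = \kappa$ shows $T_\kappa^\pi$ is affine with fixed point $v^\pi$ (consistent with Lemma~\ref{gmdp}), and applying it again with $A = T_\kappa^\pi$ and $\mu = \bar\lambda$ shows $T^\pi_{\bar{\lambda},\kappa}$ is affine with the same fixed point $v^\pi$. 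By Lemma~\ref{gmdp} with $\kappa$ replaced by $\lambda$, the operator $T_\lambda^\pi$ of \eqref{def1} is likewise affine with fixed point $v^\pi$.

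Next I would note that an affine contraction with linear part $M$ and fixed point $v^\pi$ is necessarily $v \mapsto v^\pi + M(v - v^\pi)$, hence is \emph{completely determined by its linear part together with its fixed point}. Since I will have shown that $T^\pi_{\bar{\lambda},\kappa}$ and $T_\lambda^\pi$ share the fixed point $v^\pi$, it remains only to check that they share the same linear part. Here I would use that all the relevant linear parts are convergent power series in the commuting operator $\gamma P^\pi$ (convergence guaranteed by the contraction estimates of Lemma~\ref{gmdp}), so that I may verify the operator identity by checking the corresponding identity of scalar rational functions in a variable $p$ standing for $\gamma P^\pi$. From \eqref{def2} the linear part of $T_\kappa^\pi$ is $m(p) = (1-\kappa)p/(1-\kappa p)$; smoothing once more by $\bar\lambda$ produces the linear part $(1-\bar\lambda)m/(1-\bar\lambda m)$, and substituting $m$ gives, after clearing denominators, $1-\bar\lambda m = (1-\lambda p)/(1-\kappa p)$ together with $(1-\bar\lambda)(1-\kappa) = 1-\lambda$, so that the whole expression collapses to $(1-\lambda)p/(1-\lambda p)$ — exactly the linear part of $T_\lambda^\pi$.

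I expect the only real obstacle to be this final algebraic simplification: one must verify that the choice $\lambda = \kappa + \bar{\lambda}(1-\kappa)$ is precisely what makes both the factor $(1-\bar\lambda)(1-\kappa)$ collapse to $1-\lambda$ and the denominator $1-\bar\lambda m$ collapse to $(1-\lambda p)/(1-\kappa p)$; any other value of $\lambda$ would fail to cancel. The accompanying subtlety, which I would address explicitly, is the legitimacy of passing from scalar rational identities back to operator identities. This is justified because $\gamma P^\pi$ has spectral radius at most $\gamma<1$, so every geometric series involved converges in operator norm and the operators all commute, making the scalar verification equivalent to the operator one. Combining the matching linear parts with the matching fixed point then yields $T^\pi_{\bar{\lambda},\kappa} = T^\pi_\lambda$, and the range $\lambda \in [\kappa,1]$ follows immediately from the monotonicity of $\lambda = \kappa + \bar{\lambda}(1-\kappa)$ in $\bar\lambda$ over $[0,1]$.
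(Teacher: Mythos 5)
Your proof is correct, but it takes a genuinely different route from the paper's. The paper shows that both operators satisfy the same resolvent-type functional equation $(1-\lambda T^\pi)X=(1-\lambda)T^\pi$: first for $T^\pi_\lambda$ by a telescoping computation on the geometric series, then for $T^{\pi}_{\bar{\lambda},\kappa}$ by combining that same relation (applied with $\kappa$ in place of $\lambda$) with the barycenter-preservation property of Lemma~\ref{lemma: left distributive}; it then subtracts the two equations and uses the invertibility of $I-\lambda\gamma P^\pi$ to force the difference to vanish. You instead characterize each operator as an affine contraction by its fixed point $v^\pi$ together with its linear part, and reduce the matching of linear parts to the scalar identity $(1-\bar{\lambda})m/(1-\bar{\lambda}m)=(1-\lambda)p/(1-\lambda p)$ for $m=(1-\kappa)p/(1-\kappa p)$, which is exactly where $\lambda=\kappa+\bar{\lambda}(1-\kappa)$ is forced. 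Your route makes the origin of the formula for $\lambda$ more transparent and the functional-calculus step is legitimate (all series converge since $\|\gamma P^\pi\|_\infty=\gamma<1$ and all operators are power series in the single generator $\gamma P^\pi$, hence commute); the paper's route avoids invoking fixed-point existence and the ``affine map determined by linear part plus fixed point'' principle, at the cost of a slightly more delicate manipulation of compositions of affine operators. If you write yours up, make explicit that the fixed point exists because $T^\pi_\kappa$ is a $\xi$-contraction with $\xi\le\gamma<1$ (Lemma~\ref{gmdp}), and that at the boundary $\bar{\lambda}=1$ or $\kappa=1$ the defining series must be read through the closed form \eqref{def2} --- a convention the paper's argument shares.
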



\begin{algorithm}[H]
	\caption{$\kappa\lambda$-PI}
	\label{alg:klPI}
	\begin{algorithmic}
		\STATE {\bfseries Initialize:} $\kappa \in [0,1],~\lambda\in[\kappa,1],~v \in \mathbb{R}^{|\S|}$
		\WHILE{some stopping condition is not satisfied}
		\STATE  $\pi\gets\arg\max\limits_{\pi'}\condE{\bigcdot}^{\pi'}\!\!\left[\sum_{t=0}^{\infty}(\kappa\gamma)^t (r_t+\gamma(1-\kappa) v_{t+1})\right]$
		\STATE  $v \gets \condE{\bigcdot}^{\pi}\left[\sum_{t=0}^{\infty}(\lambda\gamma)^t (r_t+\gamma(1-\lambda) v_{t+1})\right]$
		\ENDWHILE
		\STATE {\bfseries Return $\pi,v$}
	\end{algorithmic}
\end{algorithm}

Consider $\kappa\lambda$-PI (Algorithm~\ref{alg:klPI}, in which again the assignments hold component-wise). Its improvement stage is similar to the $\kappa$-PI algorithm; however, in the evaluation step, we apply $T^{\pi}_{\bar{\lambda},\kappa}$, or, equivalently, $T^{\pi}_\lambda$. Indeed, by setting $\lambda=\kappa$ we recover $\kappa$-VI (see Appendix~\ref{supp: kappa_VI_and_kappa_lambda_PI}) and by setting $\lambda=1$ we recover $\kappa$-PI. Moreover, by setting $\kappa=0$, we obtain the class of $\lambda$-PI algorithms. Notice that $\lambda\in[\kappa,1]$, whereas for $\lambda$-PI, $\lambda\in[0,1]$. We leave for future work the question 
whether the $\kappa\lambda$-PI makes sense for $\lambda\in[0,\kappa)$. 

At this point of the paper, we have reached a very general algorithmic formulation. We shall not only prove convergence for the $\kappa\lambda$-PI, but also provide a sensitivity analysis that shows how errors may propagate along the steps. This may indeed be of interest if we use approximations when computing a $\kappa$-greedy policy or updating the value function. Also, doing so, we generalize similar results on $\lambda$-PI \cite{alpi}.  In the following the subscript notation, $k$, refers to the kth iteration of the algorithm (the proof is deferred to the end of the paper for clarity, in Appendix~\ref{proofadp}). 

\begin{theorem}\label{adp}
  Let $\kappa\in[0,1]$ and $\lambda\in[\kappa,1]$. Assume that in Algorithm~\ref{alg:klPI} we employ noisy versions of the two steps at the kth iteration
  \begin{align}
    \pi_{k+1} &\leftarrow \Ga{\delta_{k+1}}{\kappa}(v_k) \label{eq_noisy_improvment}\\
  v_{k+1} &\leftarrow T_{\lambda}^{\pi_{k+1}} v_k + \epsilon_{k+1} \nonumber,
  \end{align}
  where the noisy improvement of \eqref{eq_noisy_improvment} means:
  \begin{align*}
T^{\pi_{k+1}}_\kappa v_k \ge T_\kappa v_k - \delta_{k+1}.
  \end{align*}
  Assume that for all  $k$, $\|\epsilon_k\|_\infty \le \epsilon$ and $\|\delta_k\|_\infty \le \delta$. Then,
  \begin{align*}
 \lim\sup_{k \to \infty} \|v^* - v^{\pi_k} \| &\le\! \frac{2 \xi \epsilon + \delta}{(1-\xi)^2}  \\
											 &\!=\frac{2\gamma(1-\kappa)(1-\kappa\gamma)\epsilon \! +\! (1-\kappa \gamma)^2 \delta}{(1-\gamma)^2}.  
  \end{align*}
\end{theorem}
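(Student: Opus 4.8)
The plan is to recognize $\kappa\lambda$-PI as \emph{exactly} approximate $\lambda$-PI run on the family of operators $\{T_\kappa^\pi\}_\pi$ and $T_\kappa=\max_\pi T_\kappa^\pi$ in place of the ordinary Bellman operators $\{T^\pi\}_\pi$ and $T$, and then to transcribe the error-propagation analysis of approximate $\lambda$-PI from \cite{alpi} with the discount $\gamma$ replaced throughout by the contraction coefficient $\xi=\frac{(1-\kappa)\gamma}{1-\kappa\gamma}$ of Lemma~\ref{gmdp}. The two iterates of Algorithm~\ref{alg:klPI} map onto this picture directly: the improvement step $\pi_{k+1}\leftarrow\Ga{\delta_{k+1}}{\kappa}(v_k)$ is by definition an approximately $T_\kappa$-greedy step, i.e. $T_\kappa^{\pi_{k+1}}v_k\ge T_\kappa^{\pi}v_k-\delta_{k+1}$ for every $\pi$; and writing $\bar\lambda=\tfrac{\lambda-\kappa}{1-\kappa}$, Lemma~\ref{lemma_T_pi_kappa_lambda_is_T_lambda} identifies the evaluation step with $v_{k+1}=T^{\pi_{k+1}}_{\bar\lambda,\kappa}v_k+\epsilon_{k+1}$, where $T^{\pi}_{\bar\lambda,\kappa}v=(1-\bar\lambda)\sum_{j\ge0}\bar\lambda^{\,j}(T_\kappa^\pi)^{j+1}v$ is the operator of \eqref{def1} with $T_\kappa^\pi$ as base and $\bar\lambda$ as parameter.

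Next I would verify that $(T_\kappa^\pi, T_\kappa,\xi)$ supplies every structural property used by the analysis in \cite{alpi}. By Lemma~\ref{gmdp}, each $T_\kappa^\pi$ is a monotone $\xi$-contraction with fixed point $v^\pi$, $T_\kappa$ is a $\xi$-contraction with fixed point $v^*$, and $\G_\kappa(v^*)=\G(v^*)$; moreover \eqref{def2} gives the affine form $T_\kappa^\pi v - T_\kappa^\pi w = \xi P_\kappa^\pi (v-w)$ with $P_\kappa^\pi \eqdef \tfrac{1}{\xi}(1-\kappa)\gamma P^\pi(I-\kappa\gamma P^\pi)^{-1}$. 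A short computation shows $P_\kappa^\pi$ is a genuine stochastic matrix: it is non-negative because $(I-\kappa\gamma P^\pi)^{-1}=\sum_{i\ge0}(\kappa\gamma P^\pi)^i\ge0$, and it has unit row sums because $P^\pi\mathbf 1=\mathbf 1$ yields $P_\kappa^\pi\mathbf 1=\mathbf 1$. These — affinity with a stochastic transition matrix (for the monotone componentwise recursions), the coefficient $\xi$ (for summing geometric series), the greedy inequality above, and the common fixed points $v^\pi,v^*$ — are the only facts about $(T^\pi,T,\gamma,P^\pi)$ that the approximate $\lambda$-PI proof invokes.

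With this dictionary in place I would then import the recursion verbatim: define, relative to $T_\kappa$, the loss $l_k=v^*-v^{\pi_k}$, the Bellman residual $b_k=v_k-T_\kappa^{\pi_{k+1}}v_k$, and the usual distance/shift pair, derive the componentwise linear inequalities bounding $b_{k+1}$, the distance and the shift by $\xi P_\kappa^{\pi}$ acting on the previous-iterate quantities plus the error terms $\epsilon$ and $\delta$, chain them, and take $\limsup_k$. The two nested geometric series in $\xi$ produce the factor $(1-\xi)^{-2}$ and the numerator $2\xi\epsilon+\delta$, i.e. $\limsup_k\|v^*-v^{\pi_k}\|_\infty\le\frac{2\xi\epsilon+\delta}{(1-\xi)^2}$. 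The explicit form then follows by substituting $\xi=\frac{(1-\kappa)\gamma}{1-\kappa\gamma}$ and $1-\xi=\frac{1-\gamma}{1-\kappa\gamma}$ and simplifying.

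The one place that genuinely requires care — and which I expect to be the main obstacle — is justifying the reduction itself rather than the subsequent algebra. Indeed, $T_\kappa=\max_\pi T_\kappa^\pi$ is \emph{not} the optimal Bellman operator of any literal MDP: the resolvent $(I-\kappa\gamma P^\pi)^{-1}$ couples the states, so the maximization over $\pi$ is not separable state by state and one cannot invoke a ``surrogate-MDP'' theorem as a black box. The argument survives only because the $\lambda$-PI error analysis never uses per-state separability; it uses solely the abstract operator properties established above. Checking that each step of \cite{alpi} — especially that the approximate-greedy error with respect to $T_\kappa$ plays exactly the role of the greedy error with respect to $T$ — depends only on those properties is the crux; once confirmed, the bound is a faithful transcription of the known proof with $\gamma$ replaced by $\xi$.
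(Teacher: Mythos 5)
Your reduction is sound and would very likely yield the stated bound, but it is not the route the paper takes: the authors open their proof by noting that one \emph{might} follow the steps of the approximate MPI/$\lambda$-PI error-propagation analysis --- which is essentially your plan of transcribing that analysis onto the operator family $(T_\kappa^\pi,T_\kappa)$ with $\gamma$ replaced by $\xi$ --- and then deliberately give ``an alternative and shorter proof, which is new even in the specific context of $\kappa=0$.'' Both arguments start from your ``dictionary'': the affine identity $T_\kappa^\pi v_1-T_\kappa^\pi v_2=\xi\tilde P_\kappa^\pi(v_1-v_2)$ with $\tilde P_\kappa^\pi$ stochastic. But instead of chaining the coupled componentwise recursions for the distance, shift and Bellman residual, the paper exploits invariance under constant shifts of the value: replacing $\epsilon_k$ by $\epsilon'_k=\epsilon_k-C_ke$ for a suitably chosen scalar $C_k$ (and normalizing so that $v_0-T_\kappa^{\pi_1}v_0\le 0$) forces the residual $b_k=v_k-T_\kappa^{\pi_{k+1}}v_k$ to be non-positive for all $k$ by induction; this makes the shift term non-positive as well, so only a single scalar recursion on $\max d_k$ survives and yields the $(1-\xi)^{-2}$ factor directly. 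That induction hinges on the commutation $T_\kappa^{\pi}T_\lambda^{\pi}=T_\lambda^{\pi}T_\kappa^{\pi}$ (Proposition~\ref{proposition: Tlambda commutes}), a fact your outline does not flag but which your transcription would also need when propagating the residual through the $\bar\lambda$-mixture. On your stated crux: you are right that only the abstract operator properties are used, though your worry about non-separability of $\max_\pi$ is milder than you suggest, since by \eqref{prop1} $T_\kappa v$ is the optimal value of a genuine surrogate MDP and the componentwise maximum is therefore simultaneously attained by a single policy. In short, your approach buys reuse of known machinery at the cost of carefully re-verifying each step of \cite{alpi} with both error types; the paper's buys a self-contained, decoupled, and shorter argument.
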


Once again, we can measure how increasing $\kappa$ allows improving these asymptotic bounds. Furthermore, observe that the bounds do not depend on $\lambda$. 

By setting $\epsilon=\delta=0$ we get the following corollary. 
\begin{corollary}
The $\kappa\lambda$-PI algorithms converges to the optimal value-policy pair. 
\end{corollary}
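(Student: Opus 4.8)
The plan is to read the result directly off Theorem~\ref{adp} in the exact regime and then upgrade value convergence to policy convergence using the finiteness of the MDP. Setting $\epsilon=0$ and $\delta=0$ in the error-propagation bound of Theorem~\ref{adp} makes its right-hand side $\frac{2\xi\epsilon+\delta}{(1-\xi)^2}$ vanish identically, so that $\limsup_{k\to\infty}\|v^*-v^{\pi_k}\|_\infty \le 0$. Since $\|v^*-v^{\pi_k}\|_\infty \ge 0$ for every $k$, the limsup being nonpositive forces the limit to exist and equal zero; hence $v^{\pi_k}\to v^*$. This already establishes convergence of the true value of the generated policies to the optimal value, and requires no new computation beyond substituting the error levels.

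Next I would turn value convergence into policy convergence, which is where the only genuine argument lies. Because $\S$ and $\A$ are finite, there are finitely many deterministic policies, and each greedy step of $\kappa\lambda$-PI can be taken to return such a policy; consequently the attainable values $V=\{v^\pi\}$ form a finite subset of $\R^{|\S|}$. The optimal value $v^*$ belongs to $V$ (it is attained by any stationary optimal policy) and is its componentwise maximum, so the gap $\Delta \eqdef \min\{\|v^*-w\|_\infty : w\in V,\ w\ne v^*\}$ is strictly positive. Since $\|v^*-v^{\pi_k}\|_\infty \to 0 < \Delta$, there is a $k_0$ beyond which $\|v^*-v^{\pi_k}\|_\infty < \Delta$, which by definition of $\Delta$ can only hold if $v^{\pi_k}=v^*$; equivalently $\pi_k$ is an optimal policy for all $k\ge k_0$. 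Finally, for $k\ge k_0$ the evaluation update reduces to $v_{k+1}=T^{\pi_k}_\lambda v_k$ with $\pi_k$ optimal, and since $v^*=v^{\pi_k}$ is the fixed point of the contraction $T^{\pi_k}_\lambda$ (with a modulus uniform in $\pi$), the iterates $v_k$ themselves converge to $v^*$. Thus the algorithm converges to the optimal value-policy pair.

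The main obstacle is conceptual rather than computational: one must resist the temptation to prove monotone improvement $v^{\pi_{k+1}}\ge v^{\pi_k}$ directly from Lemma~\ref{lemma_kPI_improvement}. That lemma requires the greedy step to be taken with respect to the \emph{exact} value $v^{\pi_k}$, whereas in $\kappa\lambda$-PI it is taken with respect to the approximate iterate $v_k$, so true-value monotonicity is not available. This is precisely why the detour through the error-propagation bound of Theorem~\ref{adp} is needed, and why the finite-MDP ``value-gap'' argument is the right device for recovering exact optimality of the policies from mere value convergence.
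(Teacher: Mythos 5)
Your proof is correct and follows the paper's own route: the corollary is obtained precisely by setting $\epsilon=\delta=0$ in the bound of Theorem~\ref{adp}, which forces $\limsup_{k}\|v^*-v^{\pi_k}\|_\infty\le 0$ and hence $v^{\pi_k}\to v^*$. The finite-MDP value-gap argument you add to upgrade value convergence into eventual exact optimality of the policies (and convergence of the iterates $v_k$ themselves) is a sound, standard strengthening that the paper leaves implicit, and your observation that Lemma~\ref{lemma_kPI_improvement} cannot be invoked directly because the greedy step acts on $v_k$ rather than $v^{\pi_k}$ is accurate.
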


\section{Relation to Existing Works}\label{interpretation_and_relation}
In this section we compare the $\kappa$-greedy policy (Section \ref{sec: kappa-step greedy}) to the $h$-greedy policy (Section \ref{sec: h-step greedy}). Furthermore, we connect previous works of the literature to the framework developed in this paper.

Let $v:\mathcal{S}\rightarrow \mathbb{R}$ be a value function, $h\in \mathbb{N} \textbackslash \{0\} $ and define the following random variable, the future $h$-step return:
\begin{align*}
r_{v}^{(h)}\eqdef \sum_{t=0}^{h-1}\gamma^t r_t+\gamma^h v(s_h).
\end{align*} 
Consider the $h$-greedy policy w.r.t. to $v$, defined in~\eqref{eq_h_greedy_def}. We have for all $\pi \in \G_h(v)$ and $s \in \mathcal{S}$,
\begin{align*}
\pi(s)= \arg\max_{\pi_0} \max_{\pi_1,..,\pi_{h-1}} \condE{s}^{\pi_0\dots\pi_{h-1}}\left[r_{v}^{(h)}\right].
\end{align*}
Alternatively, consider a different generalization of the greedy step in the form of a greedy policy w.r.t. a \emph{$\kappa$-weighted} average of the future rewards $\{r_{v}^{(h)}\}_{h\geq 1}$:
\begin{align*}
\G_{\mbox{\tiny $\kappa$-weighted}}(v) \eqdef \arg\max_{\pi'}\condE{\bigcdot}^{\pi'}[(1-\kappa)\sum_{h=0}^\infty \kappa^h r_{v}^{(h+1)}].
\end{align*} 
We can highlight the following strong relation between these two greedy sets (see Appendix~\ref{proof_proposition_kappa_greedy_policy_interpretation} in the supplementary material for a proof).
\begin{proposition}
\label{proposition_kappa_greedy_policy_interpretation}
Let $\delta_v(s_t)=r(s_t,\pi(s_t))+\gamma v(s_{t+1})-v(s_{t})$. We have
\begin{align*}
\G_{\text{$\kappa$-weighted}}(v)= \G_{\kappa}(v)=\arg\max_{\pi}\condE{\bigcdot}^{\pi}\! \! \left[\sum_{t=0}^\infty (\kappa\gamma)^t\delta_v(s_t) \right]\!\!.
\end{align*}

\end{proposition}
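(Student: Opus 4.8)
The plan is to reduce all three sets to argmaxima of a single vector-valued objective. Since $\G_{\kappa}(v)=\arg\max_{\pi}T_\kappa^\pi v$ by \eqref{def_kappa_greedy_operator}, it suffices to prove two operator identities: that the $\kappa$-weighted objective equals $T_\kappa^\pi v$ exactly, and that the $\delta_v$-discounted objective equals $T_\kappa^\pi v$ up to a $\pi$-independent additive constant. The structural content of the proposition is contained entirely in these two identities.

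First I would establish the $n$-step return representation $(T^\pi)^n v = \condE{\bigcdot}^\pi\!\left[r_v^{(n)}\right]$. This follows by a straightforward induction on $n$ from the definition $T^\pi v = r^\pi+\gamma P^\pi v$ in \eqref{def: Tpi}: unrolling gives $(T^\pi)^n v=\sum_{t=0}^{n-1}\gamma^t (P^\pi)^t r^\pi+\gamma^n (P^\pi)^n v$, which is precisely the expectation of $r_v^{(n)}=\sum_{t=0}^{n-1}\gamma^t r_t+\gamma^n v(s_n)$ along the trajectory induced by $\pi$. Substituting this (with $n=h+1$) into the definition \eqref{def1} of $T_\kappa^\pi$ and pulling the expectation outside the absolutely convergent geometric series yields
\[
T_\kappa^\pi v=\condE{\bigcdot}^\pi\!\left[(1-\kappa)\sum_{h=0}^\infty \kappa^h r_v^{(h+1)}\right],
\]
which is exactly the $\kappa$-weighted objective. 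Maximizing over $\pi$ then gives $\G_{\text{$\kappa$-weighted}}(v)=\arg\max_\pi T_\kappa^\pi v=\G_\kappa(v)$.

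For the third expression I would observe that in operator notation $\delta_v=T^\pi v-v$, so that $\condE{\bigcdot}^\pi\!\left[\sum_{t=0}^\infty (\kappa\gamma)^t\delta_v(s_t)\right]=\sum_{t=0}^\infty (\kappa\gamma)^t (P^\pi)^t(T^\pi v-v)=(I-\kappa\gamma P^\pi)^{-1}(T^\pi v-v)$. By \eqref{def4} this equals $T_\kappa^\pi v-v$. Because $v$ does not depend on $\pi$, the additive $-v$ leaves the component-wise argmax unchanged, so $\arg\max_\pi\condE{\bigcdot}^\pi\!\left[\sum_{t}(\kappa\gamma)^t\delta_v(s_t)\right]=\arg\max_\pi T_\kappa^\pi v=\G_\kappa(v)$, closing the chain of equalities.

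The only point requiring genuine care is the convergence of the infinite sums and the exchange of expectation with summation. Here $\gamma<1$ and $\kappa\le 1$ give $\kappa\gamma<1$, so the Neumann series $(I-\kappa\gamma P^\pi)^{-1}=\sum_{i\ge0}(\kappa\gamma P^\pi)^i$ converges; and since rewards are bounded on the finite MDP, each $r_v^{(h+1)}$ is uniformly bounded, making the $\kappa$-weighted series converge absolutely (including the limiting case $\kappa=1$, where the weights $(1-\kappa)\kappa^h$ are handled as a limit). I expect this to be the main, though routine, technical obstacle; once it is dispatched, the two operator identities above deliver the result directly.
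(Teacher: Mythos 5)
Your proposal is correct, but it takes the operator route rather than the pathwise one the paper actually writes out. The paper's proof manipulates the random variables themselves: it rearranges the double sum $(1-\kappa)\sum_{h}\kappa^h r_v^{(h+1)}$ term by term into $\sum_t(\kappa\gamma)^t\bigl(r(s_t,a_t)+(1-\kappa)\gamma v(s_{t+1})\bigr)$ for \emph{every realization} of the trajectory, and then shows the $\delta_v$ sum equals the same expression minus $v(s_0)$ by a telescoping identity, before taking expectations. You instead lift everything to the level of operators: you identify each of the three objectives with $T_\kappa^\pi v$ (exactly, or up to the $\pi$-independent shift $-v$) using the $n$-step representation $(T^\pi)^n v=\condE{\bigcdot}^\pi[r_v^{(n)}]$ together with \eqref{def1} and \eqref{def4}, and then invoke $\G_\kappa(v)=\arg\max_\pi T_\kappa^\pi v$. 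This is precisely the alternative derivation the paper alludes to when it says the result ``can also be derived using the relations in \eqref{def1}, \eqref{def2} and \eqref{def4}'' but does not carry out. Your version is shorter and reuses machinery already established; the paper's version buys a slightly stronger fact, namely that the identities hold almost surely and not only in expectation, which is the more natural statement if one wants to build sample-based estimators of these objectives. Your handling of the interchange of expectation and summation (bounded rewards, $\kappa\gamma<1$, with $\kappa=1$ treated as a limit via \eqref{def2}) is adequate and matches the level of rigor of the paper.
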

The second equality in Proposition~\ref{proposition_kappa_greedy_policy_interpretation} reveals a connection to two existing works: planning with shorter horizons \cite{ng2003shaping} and generalized advantage estimation \cite{schulman2015high}. 
Using the terminology of our work, the approach of \cite{ng2003shaping} is equivalent to performing a single $\kappa$-greedy improvement step. The theory we developped in this paper suggests that there is no reason to stop after only one step.
In \cite{schulman2015high}, the implemented algorithm is equivalent to an on-line Policy Gradient variant of the $\kappa$-PI algorithm. As Proposition \ref{proposition_kappa_greedy_policy_interpretation} in our work states, the objective function considered in \cite{schulman2015high} describes the very same surrogate MDP that is being solved in the improvement step of $\kappa$-PI. Moreover, in that work, an evaluation algorithm estimates the value of the current policy similarly to the evaluation stage of $\kappa$-PI. We can interpret the empirically demonstrated trade-off in $\lambda$ of \cite{schulman2015high} as a trade-off in $\kappa$. Finally, the policy update phase in the MCTS approach in RL, and in  Alpha-Go \cite{silver2016mastering,silver2017mastering,silver2017mastering2} as an instance of it, is conceptually similar to the policy update in an asynchronous online version of $h$-PI.

\section{Experimental Results}
\label{sec: experiments}
In this section we empirically test the $h$- and $\kappa$-PI algorithms on a toy grid-world problem. As mentioned before, in $h$-PI, performing the greedy step amounts to solving a $h$-horizon optimal control problem (Remark~\ref{astar}), and in $\kappa$-PI, it amounts to solving a $\gamma\kappa$-discounted stationary MDP (Remark~\ref{solving_with_smaller_discount}). In both cases, conducting these operations in practice can be done with either a generic planning, model-free or model-based algorithm. Here, we implement the $h$- and $\kappa$-greedy step via the VI algorithm. In the former case, we simply do $h$ steps, while in the latter case, we stop VI when the value change in max norm is less than $\epsilon=10^{-5}$ (other values did produce the same qualitative behavior). With this choice, note that $\kappa=1$ is equivalent to solving the problem with~VI.

\begin{figure*}[ht]
\centering
\subfloat{\includegraphics[scale=0.4]{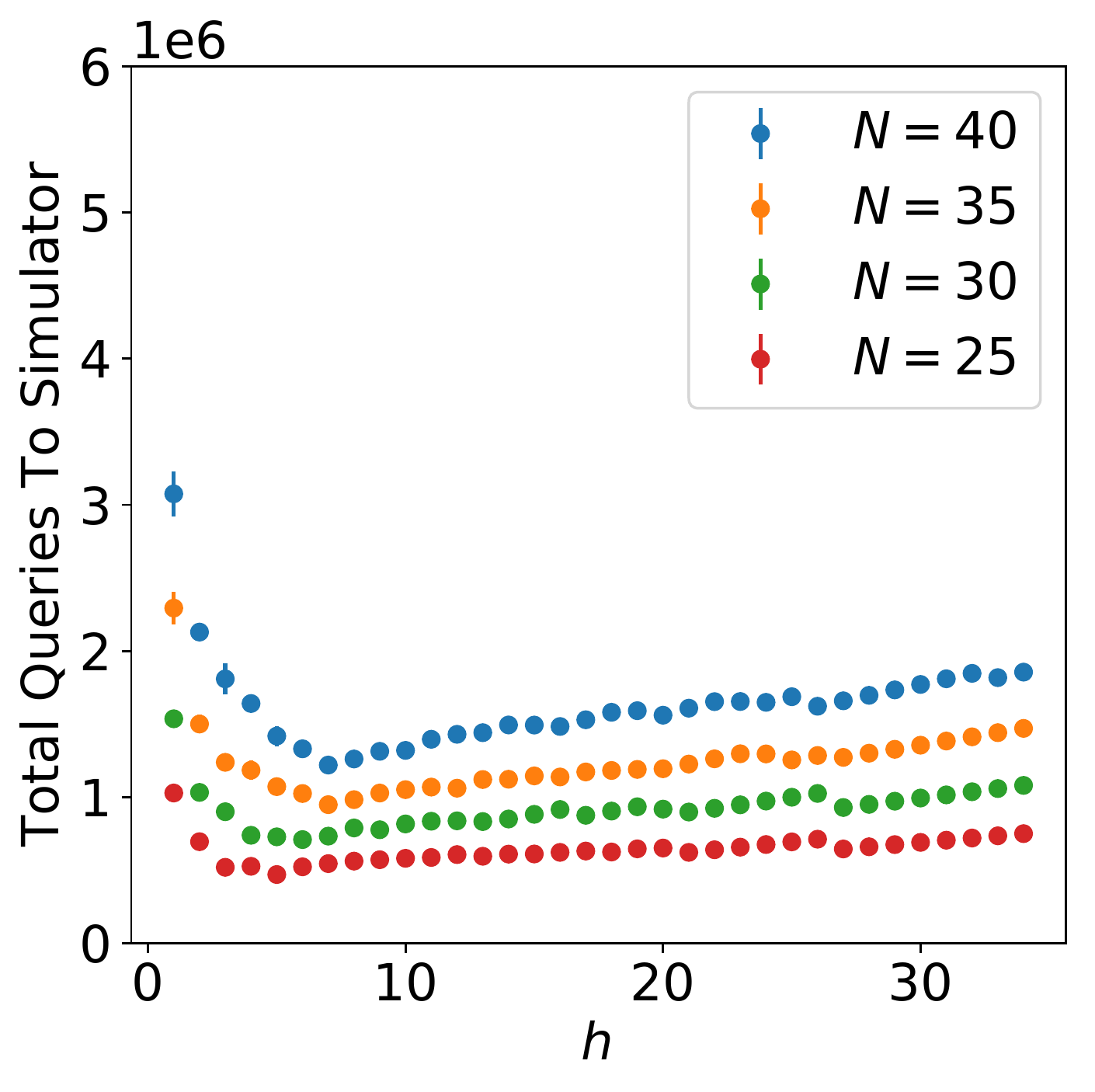}}
\centering
\subfloat{\includegraphics[scale=0.4]{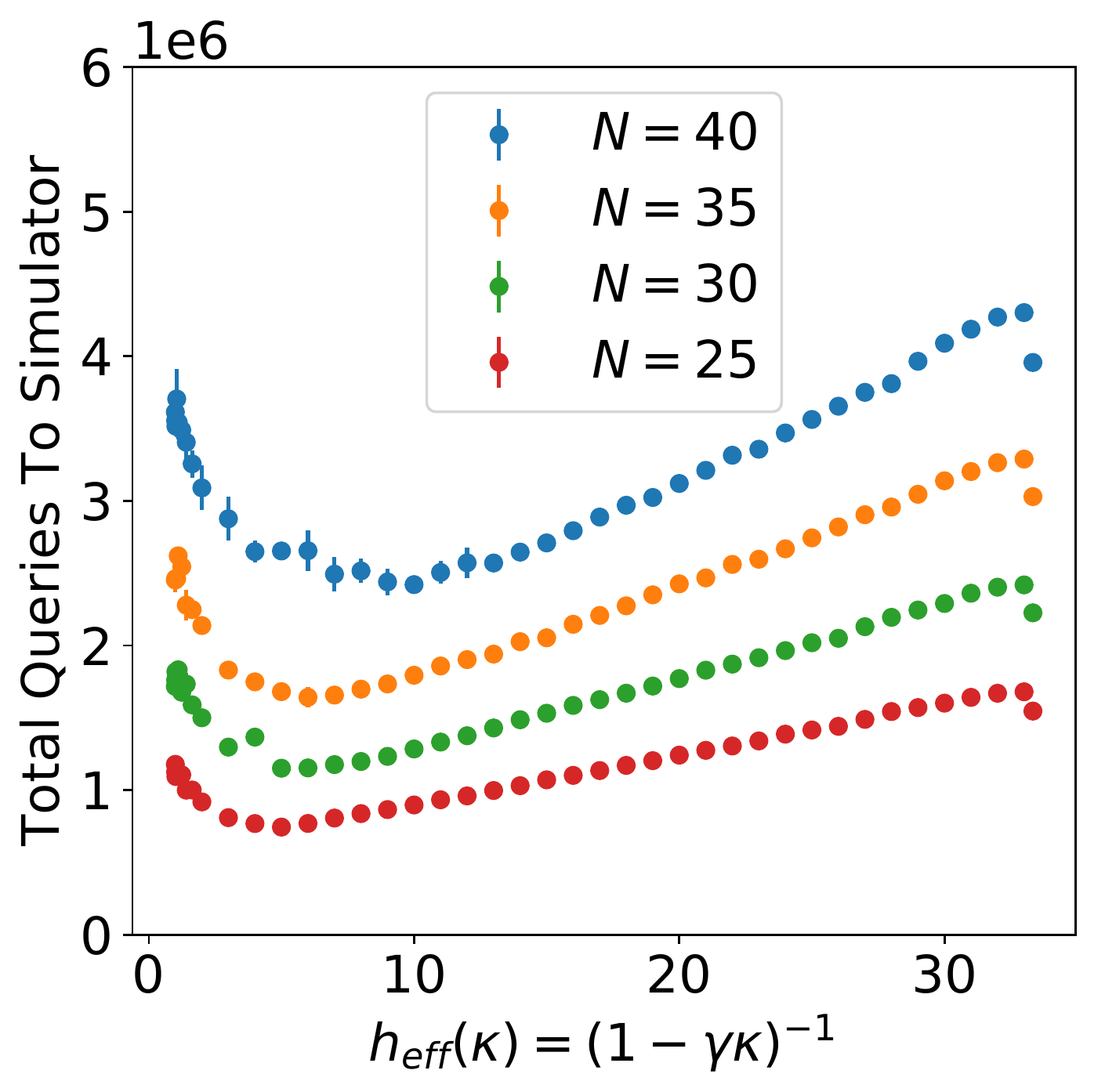}}
\centering
\subfloat{\includegraphics[scale=0.4]{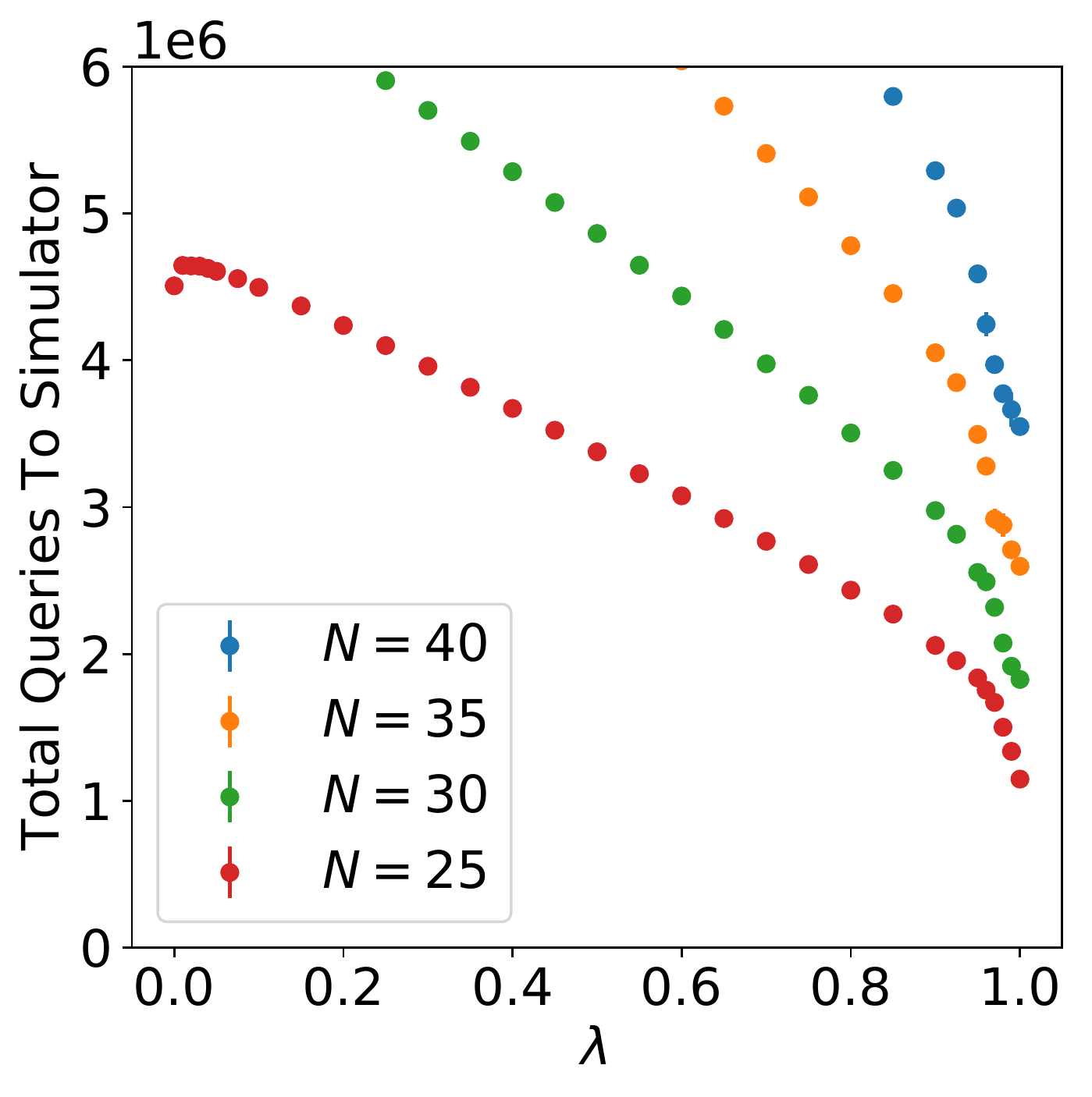}}
\caption{Empirical performance of $h$-PI, $\kappa$-PI and $\lambda$-PI for different grid sizes, $N$ (see Section~\ref{sec: experiments}). The shown results are the average of 5 experiments. The standard deviation is shown as errorbars. In all plots, the y-axis is the total queries to simulator until convergence, which we chose as a performance criterion. 
	 \textbf{(Left)} Performance of $h$-PI as a function of $h$. \textbf{(Center)} Performance of $\kappa$-PI as a function of $h_{\rm{eff}}(\kappa)=(1-\gamma\kappa)^{-1}$, the `effective' planning horizon. The $\kappa$ values that resulted in the lowest number of simulator queries are $\kappa_{\rm{opt}}=0.82,0.82,0.88,0.92$ for $N=25,30,35,40,$ respectively. \textbf{(Right)} Performance of $\lambda$-PI  as a function of $\lambda.$ This corresponds to $\kappa\lambda$-PI with $\kappa=0$.
	The three plots demonstrate that the algorithms introduced in this work,  $h$-PI and $\kappa$-PI, can outperform $\lambda$-PI in terms of best empirical performance.} 

\label{fig_emprical_results}
\end{figure*}

We conduct our simulations on a simple $N \times N$ deterministic grid-world problem with $\gamma=0.97$. The actions set is \{`up',`down',`right',`left',`stay'\}. In each experiment, we randomly chose a single state and placed a reward $r_g=1$. In all other states the reward was drawn uniformly from $[-0.1r_g,0.1r_g]$. In the considered problem there is no terminal state. Also, the entries of the initial value function are drawn from $\mathcal{N}(0,r^2_g)$. We ran $h$- and $\kappa$-PI and counted the \emph{total} number of calls to the simulator. Each such ``call'' takes a state-action pair $(s,a)$ as input, and returns the current reward and next (deterministic) state.

Figure \ref{fig_emprical_results} shows the average number of calls to the simulator as a function of $h$ or $\kappa$. For each value, experiments were conducted $5$ times. The figure depicts that optimal computational efficiency is obtained for some value of the parameters that is not trivial ($h \not\in \{1,\infty\}$ and $\kappa \not\in \{0,1\}$). Empirically, these 'optimal' parameter values slowly grow with the grid dimension $N$.
For a comparison, we measured the empirical performance of $\lambda$-PI (notice that $\lambda=1$, which is PI, corresponds to $\kappa$-PI with $\kappa=0$). Our simulations show that the performance of $\lambda$-PI is inferior to that of our new algorithms.



\section{Discussion and Future Work}

In this work, we introduced and formulated two possible approaches that generalize the traditional 1-step greedy operator.
Borrowing from the general principle behind Dynamic Programming, we  proposed a new family of techniques for solving a single complex problem via iteratively solving smaller sub-problems.  
We showed that the discussed approaches are coherent with previous lines of work, by generalizing existing analyses on the 1-step greedy policy to the $h$- and $\kappa$-greedy policies. In particular, we derived new algorithms and showed their convergence. By introducing and analyzing the $\kappa\lambda$-PI, we demonstrated that the $\kappa$-PI can be used with a `relaxed' value estimation and that the effects of noise are controlled. Last but not least, by making connections with some recent empirical works \cite{schulman2015high,silver2017mastering}, our work sheds some light on the reasons of their impressive success.
We conducted simulations on a toy example and have shown how a generic RL algorithm (VI) can be used for the greedy step of the $\kappa$- and $h$-PI frameworks. We empirically demonstrated that such a new algorithm leads to a performance improvement. By using other techniques for solving the greedy step (see in particular Remarks~\ref{astar} and \ref{solving_with_smaller_discount}), many new algorithms may be built. 

It may be interesting to consider online versions (i.e. stochastic approximations) of the algorithmic schemes we have introduced here, which would probably require to consider at least two time scales (one for the inner surrogate problems, one for the main loop). On the theoretical side, our first attempts in this direction suggest that previous approaches for proving online convergence of PI \cite{konda1999actor,kakade:02} may not be so straightforwardly generalized. We are currently working on this extension.

A potential practical extension of this work would be to consider a state-dependent $\kappa$ value, that is a function  $\kappa:\mathcal{S} \rightarrow [0,1]$. Though the details of such a generalization need to be written carefully, we believe that the machinery we developped here will still hold. We expect the convergence to be assured with rates that would intricately depend on this $\kappa$ function. Using such an approach, the algorithm designer could put more `prior knowledge' into the learning phase. In general though, understanding better when the choice of a $\kappa$ function would be good or not (and even understanding how to choose the $\kappa$ or $h$ parameters of the algorithms described here) is intriguing and deserves future investigation.




\appendix

\section{Proof of Theorem~\ref{adp}}
\label{proofadp}

We might follow the steps in \cite{opi,ampi}. But we give here an alternative and shorter proof, which is new even in the specific context of $\kappa=0$.

  Using (\ref{def2}), for any $\pi$, we have
  \begin{align*}
  T_\kappa^\pi v_1 - T^\pi_\kappa v_2 &= \gamma(1-\kappa)(I-\kappa\gamma P^\pi)^{-1} (v_1-   v_2)\\
					                  &= \xi D^\pi_\kappa P^\pi( v_1-v_2)\eqdef \xi \tilde P^\pi( v_1-v_2),
  \end{align*}
  where we defined $D^\pi_\kappa = (1-\kappa\gamma)(I-\kappa\gamma P^\pi)^{-1}$, a stochastic matrix, and $\tilde P_\kappa^\pi =D^\pi_\kappa P^\pi$, also a stochastic matrix.

  Define the following alternative error $\epsilon'_k=\epsilon_k - C_k e$, where $C_k=\frac{\max \delta_{k+1} + \max \epsilon_k-\xi \min \epsilon_k}{1-\xi}$ and $e$ is the constant vector made of ones. Since for all $\alpha$, $T_\kappa^\pi (v+\alpha e)=T_\kappa^\pi v + \xi \alpha e$ and $\Ga{\delta}{\kappa}(v)=\Ga{\delta}{\kappa}(v+\alpha e)$, the sequence of policies that can be generated by the original algorithm, with error $\epsilon_k$, is the same as that that would be generated by the algorithm with erros $\epsilon'_k$. From now on, let us consider the latter.

  With a similar invariance argument, let us assume, without loss of generality, that $v_0-T_\kappa^{\pi_1} v_0 \le 0$. Then, one can see that for all $k \ge 1$,
  \begin{align*}
    b_k &\eqdef v_k - T_\kappa^{\pi_{k+1}} v_k  \\
    &\le v_k - T_\kappa^{\pi_k} v_k + \delta_{k+1} \\
    & =  (v_k - \epsilon'_k) - T_\kappa^{\pi_k} (v_k - \epsilon'_k) + (1-\xi)\epsilon'_k + \delta_{k+1} \\
    & \le  T_\lambda ^{\pi_k} v_{k-1} - T_\kappa^{\pi_k}T_\lambda ^{\pi_k}v_{k-1} \\
    &\ \ \   + (\max \delta_{k+1} +\max \epsilon'_k-\xi \min \epsilon'_k)e  \\
    & =  T_\lambda ^{\pi_k} v_{k-1} - T_\lambda ^{\pi_k} T_\kappa^{\pi_k}v_{k-1} \\
    &\ \ \ + (\underbrace{\max \delta_{k+1} + \max \epsilon_k - C_k - \xi (\min \epsilon_k - C_k) }_{0})e \\
    & = \xi_\lambda \tilde P_\lambda^{\pi_k}(v_{k-1}-T_\kappa^{\pi_k}v_{k-1}) = \xi_\lambda \tilde P_\lambda^{\pi_k} b_{k-1}.
    \end{align*}
In the fifth relation we used Proposition \ref{proposition: Tlambda commutes} (see Appendix \ref{supp: affinity}) that shows $T_\kappa T_\lambda = T_\lambda T_\kappa$, and defined ${\xi_\lambda \eqdef \gamma\frac{1-\lambda}{1-\lambda\gamma}}$. Thus, since $\tilde P_\lambda^{\pi_k}$ has only non-negative elements, by induction, we have $b_k \le 0$ for all $k$.

Then, since $(1-\bar{\lambda})\sum_{j=0}^\infty \bar{\lambda}^j=1$, one can see that
\begin{align*}
  d_{k+1} &\eqdef v_* - (v_{k+1}-\epsilon'_{k+1})\\
   & = v^* - (1-\bar{\lambda})\sum_{j=0}^\infty \bar{\lambda}^j (T_\kappa^{\pi_{k+1}})^{j+1} v_k  \\
   & = (1-\bar{\lambda})\sum_{j=0}^\infty \bar{\lambda}^j (v^* - (T_\kappa^{\pi_{k+1}})^{j+1} v_k) .
\end{align*}
Each term in the sum satisfies:
\begin{align*}
  &v^* - (T_\kappa^{\pi_{k+1}})^{j+1} v_k & \\ 
  & = T_\kappa^{\pi^*} v^* - T_\kappa^{\pi^*}v_k + T_\kappa^{\pi^*}v_k \\
  &\ \ \ -  T_\kappa^{\pi_{k+1}}v_k + \sum_{i=1}^{j} (T_\kappa^{\pi_{k+1}})^i v_k - (T_\kappa^{\pi_{k+1}})^{i+1} v_k \\
  & ~\le~ \xi  \tilde P_\kappa^{\pi^*} (v^*-v_k)+ \delta_{k+1} + \sum_{i=1}^{j} (\xi \tilde P_\kappa^{\pi_{k+1}})^{i}b_k\\
  & ~\le~  \xi  \tilde P_\kappa^{\pi^*} (v^*-v_k) +\delta_{k+1},
\end{align*}
and hence we deduce that
\begin{align*}
 d_{k+1} & ~\le~  \xi \tilde P_\kappa^{\pi^*} (v^*-v_k) + \delta_{k+1} \\
 & =  \xi \tilde P_\kappa^{\pi^*} d_k - \xi \tilde P_\kappa^{\pi^*} \epsilon'_k + \delta_{k+1} 
\end{align*}
and thus
\begin{align*}
 &\max d_{k+1} \\
  & \le  \xi \max d_k  + \max \delta_{k+1}- \xi \min \epsilon'_k \\
  & =   \xi \max d_k + \max \delta_{k+1} \\
  &\ \ \      - \xi \left(\min \epsilon_k -  \frac{\max \delta_{k+1} + \max \epsilon_k-\xi \min \epsilon_k}{1-\xi} \right) \\
 & =  \xi \max d_k + \frac{\max \delta_{k+1}+ \xi (\max \epsilon_k- \min \epsilon_k)}{1-\xi},
\end{align*}
which eventually leads to
\begin{align*}
\lim_{k \to \infty} d_k \le \frac{\xi(\max \epsilon_k- \min \epsilon_k)+\max \delta_{k+1}}{(1-\xi)^2} .
\end{align*}
To conclude, we finally observe that
\begin{align*}
  s_k &\eqdef v_{k+1}-\epsilon_{k+1}-v_{\pi^{k+1}}\\ 
  & =(1-\bar{\lambda}) \sum_{j=0}^\infty \bar{\lambda}^j (T_\kappa^{\pi_{k+1}})^{j+1} v_k - v^{\pi_{k+1}} \\
  & = (1-\bar{\lambda}) \sum_{j=0}^\infty \bar{\lambda}^j  ((T_\kappa^{\pi_{k+1}})^{j+1} v_k - v^{\pi_{k+1}}),
\end{align*}
and each term of the sum satisfies:
\begin{align*}
  &(T_\kappa^{\pi_{k+1}})^{j+1} v_k - v^{\pi_{k+1}}\\
  & = \sum_{i \ge j+1} (T_\kappa^{\pi_{k+1}})^{i} v_k - (T_\kappa^{\pi_{k+1}})^{i+1} v_k \\
  & = \sum_{i \ge j+1} (\xi \tilde P_\kappa^{\pi_{k+1}})^{i}b_k  \le 0.
  \end{align*}
In other words, we deduce that $s_k$ is non-positive. The result follows from the fact that $v_*-v_{\pi_k}=d_k+s_k$ and that $\max \epsilon_k-\min \epsilon_k \le 2 \|\epsilon_k\|$.

\section*{Acknowledgments}
We thank Timothy Mann and Guy Tennenholtz for helpful discussions and the reviewers for their comments. We also thank Liyu Chen for pointing to an incomplete proof which is now fixed. This work was partially funded by the Israel Science Foundation under contract 1380/16 and by the European Community’s Seventh Framework Programme (FP7/2007-2013) under grant agreement 306638 (SUPREL)


\bibliography{kPI}
\bibliographystyle{icml2018}

\onecolumn

\section{Affinity of the Fixed Policy Bellman Operator and Consequences}\label{supp: affinity}

In this section we show that due to the affinity of the fixed-policy Bellman operator, $T^\pi$, it also preserves barycenters. We discuss the consequences of this observation and specifically prove that $T_{\lambda}$ commutes with any other $T_{\lambda'}$ operator.

\begin{lemma} \label{lemma: left distributive}
Let $\{v_i,\lambda_i\}_{i=0}^\infty$ be a series of value functions, $v_i\in \mathbb{R}^{|\mathcal{S}|}$, and positive real numbers, $\lambda_i\in \mathbb{R}^+$, such that $\sum_{i=0}^\infty \lambda_i=1$. Let $T^\pi$ be a fixed policy Bellman operator and $n\in \mathbb{N}$. Then,
\begin{align*}
&T^\pi(\sum_{i=0}^\infty \lambda_i v_i )= \sum_{i=0}^\infty \lambda_i T^\pi v_i,\\
&(T^\pi)^n(\sum_{i=0}^\infty \lambda_i v_i )= \sum_{i=0}^\infty \lambda_i (T^\pi)^n v_i.
\end{align*}
\end{lemma}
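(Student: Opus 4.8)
The plan is to exploit the \emph{affine} structure of $T^\pi$. Recall from \eqref{def: Tpi} that $T^\pi v = r^\pi + \gamma P^\pi v$, i.e., $T^\pi$ is the sum of the constant vector $r^\pi$ and the linear map $v \mapsto \gamma P^\pi v$. The entire content of the lemma is the elementary fact that an affine map preserves barycenters, meaning convex combinations whose coefficients sum to one; the hypothesis $\sum_{i=0}^\infty \lambda_i = 1$ is exactly what makes this work.

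First I would establish the base case $n=1$ directly. Applying $T^\pi$ to the barycenter and using linearity (and continuity) of the finite-dimensional map $P^\pi$,
\begin{align*}
T^\pi\Big(\sum_{i=0}^\infty \lambda_i v_i\Big) = r^\pi + \gamma P^\pi \Big(\sum_{i=0}^\infty \lambda_i v_i\Big) = r^\pi + \gamma \sum_{i=0}^\infty \lambda_i P^\pi v_i.
\end{align*}
The crucial step is then to absorb the constant $r^\pi$ into the sum using $\sum_{i=0}^\infty \lambda_i = 1$, writing $r^\pi = \sum_{i=0}^\infty \lambda_i r^\pi$, so that
\begin{align*}
T^\pi\Big(\sum_{i=0}^\infty \lambda_i v_i\Big) = \sum_{i=0}^\infty \lambda_i \big(r^\pi + \gamma P^\pi v_i\big) = \sum_{i=0}^\infty \lambda_i T^\pi v_i.
\end{align*}

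For the powers $(T^\pi)^n$ I would proceed by induction on $n$. Assuming the claim for $n$, I apply $T^\pi$ once more and observe that $\{(T^\pi)^n v_i\}_i$ is again a family of value functions combined with the same weights $\lambda_i$ summing to one; the base case then yields $(T^\pi)^{n+1}\big(\sum_i \lambda_i v_i\big) = T^\pi\big(\sum_i \lambda_i (T^\pi)^n v_i\big) = \sum_i \lambda_i (T^\pi)^{n+1} v_i$, closing the induction.

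The only point requiring genuine care — the single place that is more than bookkeeping — is the interchange of the linear map $P^\pi$ with the infinite sum. Since $\mathcal{S}$ is finite, $P^\pi$ is a bounded, hence continuous, linear operator on $\mathbb{R}^{|\mathcal{S}|}$, so it commutes with any convergent series; I would simply note that the series $\sum_{i=0}^\infty \lambda_i v_i$ is assumed to converge and invoke continuity to justify pulling $P^\pi$ inside. No contraction or fixed-point properties of $T^\pi$ are needed anywhere in the argument.
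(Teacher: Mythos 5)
Your proof is correct and follows essentially the same route as the paper's: expand $T^\pi$ via its affine form $r^\pi + \gamma P^\pi v$, absorb $r^\pi$ into the sum using $\sum_i \lambda_i = 1$, and obtain the statement for powers by iterating the base case. The only difference is that you make the interchange of $P^\pi$ with the infinite sum explicit via continuity, a point the paper leaves implicit.
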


\begin{proof}
Using simple algebra and the definition of $T^\pi$ (see Definition \ref{def: Tpi}) we have that
\begin{align*}
T^\pi(\sum_{i=0}^\infty \lambda_i v_i ) &= r^\pi +\gamma P^\pi (\sum_{i=0}^\infty \lambda_i v_i)\\
&=r^\pi +\sum_{i=0}^\infty \lambda_i \gamma P^\pi  v_i\\
&=\sum_{i=0}^\infty \lambda_i  \left(r^\pi + \gamma P^\pi v_i \right)=\sum_{i=0}^\infty \lambda_i  T^\pi v_i.
\end{align*}

The second claim is a result of the first claim and is proved by iteratively applying the first relation.
\end{proof}

This lemma induces the following property.
\begin{proposition}\label{proposition: Tlambda commutes}
Let $T^\pi_{\lambda}$ be the fixed-policy $\lambda$ Bellman operator, $\lambda_1,\lambda_2 \in [0,1]$ and $v\in \mathbb{R}^{|\mathcal{S}|}$ be a value function. Then,
\begin{align*}
T^\pi_{\lambda_1} T^\pi_{\lambda_2} v = T^\pi_{\lambda_2} T^\pi_{\lambda_1} v,
\end{align*}
i.e, $T^\pi_{\lambda_1}, T^\pi_{\lambda_2}$ commute.
\end{proposition}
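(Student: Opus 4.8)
The plan is to expand both compositions using the series definition \eqref{def1} and show that each reduces to the \emph{same} symmetric double series. Throughout I assume for now that $\lambda_1,\lambda_2\in[0,1)$, so that the geometric weights below are genuine; the boundary case $\lambda=1$ will be dispatched separately at the end.

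First I would write, for any value $v$,
\[
T^\pi_{\lambda_2} v = (1-\lambda_2)\sum_{i=0}^\infty \lambda_2^i (T^\pi)^{i+1} v ,
\]
and observe that the weights $(1-\lambda_2)\lambda_2^i$ are nonnegative and sum to one, so that $T^\pi_{\lambda_2} v$ is a genuine barycentric combination of the iterates $\{(T^\pi)^{i+1}v\}_{i\ge 0}$. The key step is then to apply $T^\pi_{\lambda_1}$ and push each operator $(T^\pi)^{j+1}$ through this inner barycenter using the power form of Lemma~\ref{lemma: left distributive}. This gives
\[
T^\pi_{\lambda_1}T^\pi_{\lambda_2} v = (1-\lambda_1)(1-\lambda_2)\sum_{j=0}^\infty\sum_{i=0}^\infty \lambda_1^j\lambda_2^i (T^\pi)^{i+j+2} v .
\]

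Now I would exploit symmetry: the exponent $i+j+2$ is symmetric in $i,j$, and relabeling $i\leftrightarrow j$ turns the coefficient $\lambda_1^j\lambda_2^i$ into $\lambda_1^i\lambda_2^j$, which is exactly the coefficient appearing in the analogous expansion of $T^\pi_{\lambda_2}T^\pi_{\lambda_1}v$. Since the double series converges absolutely componentwise (the weights are nonnegative and sum to one, and $T^\pi$ is a $\gamma$-contraction so the iterates stay bounded), the order of summation may be swapped and the two compositions coincide. I expect the main obstacle — and the only genuine subtlety — to be the legitimacy of interchanging the operator $(T^\pi)^{j+1}$ with the \emph{infinite} barycentric sum; this is precisely what Lemma~\ref{lemma: left distributive} was established to provide (it is stated for series $\{v_i,\lambda_i\}_{i=0}^\infty$), so the step is clean once that lemma is invoked.

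Finally I would treat the boundary value $\lambda=1$ separately, where $(1-\lambda)=0$ multiplies a divergent geometric series and the form \eqref{def1} degenerates. Recalling from \eqref{def2} that $T^\pi_1 v = (I-\gamma P^\pi)^{-1}r^\pi = v^\pi$ independently of $v$, both $T^\pi_{\lambda_1}T^\pi_1 v$ and $T^\pi_1 T^\pi_{\lambda_1}v$ collapse to $v^\pi$ (using that $v^\pi$ is the fixed point of $T^\pi$, hence of $T^\pi_{\lambda_1}$ by Lemma~\ref{gmdp}), so commutativity is immediate there. The substance of the proof is therefore entirely contained in the interior case handled above.
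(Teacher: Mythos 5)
Your proof is correct and follows essentially the same route as the paper's: expand both compositions into the double series $(1-\lambda_1)(1-\lambda_2)\sum_{i,j}\lambda_1^i\lambda_2^j (T^\pi)^{i+j+2}v$ via Lemma~\ref{lemma: left distributive} and conclude by symmetry of the summand. Your separate treatment of the degenerate boundary case $\lambda=1$ through the resolvent form \eqref{def2} is a small point of extra care that the paper's proof omits, but it does not change the substance of the argument.
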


\begin{proof}
We have that
\begin{align*}
T^\pi_{\lambda_1} T^\pi_{\lambda_2} v & = (1-\lambda_1) \sum_{i=0}^\infty \lambda_1^i (T^\pi)^{i+1}  T^\pi_{\lambda_2} v \\
& = (1-\lambda_1) \sum_{i=0}^\infty \lambda_1^i (T^\pi)^{i+1} \left( (1-\lambda_2) \sum_{j=0}^\infty \lambda_2^j (T^\pi)^{j+1} v \right) \\
& = (1-\lambda_1)(1-\lambda_2) \sum_{i,j=0}^\infty \lambda_1^i\lambda_2^j (T^\pi)^{i+1}(T^\pi)^{j+1} v \\
& = (1-\lambda_2)(1-\lambda_1) \sum_{i,j=0}^\infty \lambda_2^j \lambda_1^i (T^\pi)^{j+1} (T^\pi)^{i+1} v \\
& = (1-\lambda_2) \sum_{j=0}^\infty \lambda_2^j (T^\pi)^{j+1}  \left( (1-\lambda_1) \sum_{i=0}^\infty \lambda_1^i (T^\pi)^{i+1} v \right) = T^\pi_{\lambda_2} T^\pi_{\lambda_1} v.
\end{align*}
The first and third relations use Definition \ref{def1}, the forth and sixth relations are due to Lemma \ref{lemma: left distributive}, and the fifth relation is due to the fact that every operator commutes with itself.
\end{proof}

\section{Proof of Lemma \ref{lemma_contracting_sequence_h_PI} }\label{proof_lemma_contracting_sequence_h_PI}
The proof goes as follows.
\begin{align*}
v^*-v^{\pi_k} &=T^{\pi^*}T^{h-1}v^*-T^{\pi^*}T^{h-1}v^{\pi_{k-1}}+T^{\pi^*}T^{h-1}v^{\pi_{k-1}} - T^{\pi_k}T^{h-1}v^{\pi_{k-1}}+T^{\pi_k}T^{h-1}v^{\pi_{k-1}}-T^{\pi_k}v^{\pi_k}\\
&\leq T^{\pi^*}T^{h-1}v^*-T^{\pi^*}T^{h-1}v^{\pi_{k-1}}+T^{\pi_k}T^{h-1}v^{\pi_{k-1}}-T^{\pi_k}v^{\pi_k}\\
&\leq \gamma P^{\pi^*}(T^{h-1}v^*-T^{h-1}v^{\pi_{k-1}}) + \gamma P^{\pi_k}(T^{h-1}v^{\pi_{k-1}}-v^{\pi_{k}})\\
&\leq \gamma P^{\pi^*}(T^{h-1}v^*-T^{h-1}v^{\pi_{k-1}}).
\end{align*}
The second relation uses that  $T^{\pi^*}T^{h-1}v^{\pi_{k-1}}\leq T^{\pi_k}T^{h-1}v^{\pi_{k-1}}$ since $\pi_k\in \G_h(v^{\pi_{k-1}})$. The third relation holds by definition, and the fourth because $T^{h-1}v^{\pi_{k-1}}\leq v^{\pi_{k}}$, as as seen from Lemma \ref{lemma_hPI_improves}. Since $v^*-v^{\pi_k}$ is non-negative we can take the max norm. By using the fact that $T$ is a $\gamma$ contraction in the max norm, $T^{h-1}$ is a $\gamma^{h-1}$ contraction in the max norm. Thus, by taking the max norm of both sides we prove the claim.

\section{Proof of Lemma \ref{lemma_contracting_sequence_kPI} }\label{proof_lemma_contracting_sequence_kPI}
Using the fixed point property of the operator in Lemma \ref{gmdp},
\begin{align*}
v^*-v^{\pi_k} &=T^{\pi^*}_\kappa v^*-T^{\pi^*}_\kappa v^{\pi_{k-1}}+T^{\pi^*}_\kappa v^{\pi_{k-1}}-T^{\pi_k}_\kappa v^{\pi_{k-1}}+T^{\pi_k}_\kappa v^{\pi_{k-1}}-v^{\pi_k}\\
&\leq T^{\pi^*}_\kappa v^*-T^{\pi^*}_\kappa v^{\pi_{k-1}}+T^{\pi_k}_\kappa v^{\pi_{k-1}}-v^{\pi_k}\\
&\leq T^{\pi^*}_\kappa v^*-T^{\pi^*}_\kappa v^{\pi_{k-1}}.
\end{align*}
The second relation holds since $T^{\pi^*}_\kappa v^{\pi_{k-1}}\leq T^{\pi_k}_\kappa v^{\pi_{k-1}}=T_\kappa v^{\pi_{k-1}},$ as follows from the definition of $T_\kappa$ (see \eqref{prop1}). The third relation uses that $T^{\pi_k}_\kappa v^{\pi_{k-1}}\leq v^{\pi_k},$ as follows from the proof of Lemma \ref{lemma_kPI_improvement}. Since $v^*-v^{\pi_k}$ is positive, we take the max norm and use the fact that $T^{\pi^*}_\kappa$ is $\xi$ contraction in the max norm to conclude the proof.

\section{Proof of Lemma \ref{lemma_T_pi_kappa_lambda_is_T_lambda} }\label{proof_lemma_T_pi_kappa_lambda_is_T_lambda}

Since for the operator $T^{\pi}$ the distributive property holds, i.e, $aT^\pi+b(T^\pi)^2 = (a+bT^\pi)T^\pi$, we have the following relation for any $\lambda\in[0,1]$.
\begin{align*}
(1-\lambda T^\pi)T_\lambda^\pi=&(1-\lambda T^\pi)(1-\lambda)\sum_{i=0}^\infty \lambda^i (T^\pi)^{i+1} \\
=&(1-\lambda)\sum_{i=0}^\infty \lambda^i (T^\pi)^{i+1}- (1-\lambda)\lambda T^\pi \sum_{i=0}^\infty \lambda^i (T^\pi)^{i+1}\\
=&(1-\lambda)\sum_{i=0}^\infty \lambda^i (T^\pi)^{i+1}- (1-\lambda)\sum_{i=1}^\infty \lambda^i (T^\pi)^{i+1}\\
=&(1-\lambda)T^{\pi}.
\end{align*}
The first relation uses that the infinite sum converges and the definition of the $T^\pi_\lambda$ operator (cf. \eqref{def1}), in the third relation we used Lemma \ref{lemma: left distributive}, and in the forth relation all terms cancel out except for the first term (with $i=0$) in the first series. 

We thus get \emph{a new relation} the $T^\pi_\lambda$ operator satisfies,
\begin{equation}\label{lambda_PI_relation}
(1-\lambda T^\pi)T_\lambda^\pi =(1-\lambda)T^\pi.
\end{equation} 

Let $\bar{\lambda}\in[0,1],\kappa\in[0,1]$ and $\lambda =\kappa + \bar{\lambda}(1-\kappa)$. We now expand $(1-\lambda T^\pi) T_{\bar{\lambda},\kappa}^\pi$.
\begin{align}
(1-\lambda T^\pi)T_{\bar{\lambda},\kappa}^\pi& = (1-\lambda T^\pi)(1-\bar{\lambda})\sum_{j=0}^\infty \bar{\lambda}^j (T_\kappa^{\pi})^{j+1} \nonumber \\
&= (\left(1-\kappa T^\pi\right) - \left(\bar{\lambda}(1-\kappa)T^\pi\right))(1-\bar{\lambda})\sum_{j=0}^\infty \bar{\lambda}^j (T_\kappa^{\pi})^{j+1} \nonumber \\
& = (1-\kappa T^\pi)\left((1-\bar{\lambda})\sum_{j=0}^\infty \bar{\lambda}^j (T_\kappa^{\pi})^{j+1}\right)-\bar{\lambda}(1-\kappa)T^\pi\left( (1-\bar{\lambda})\sum_{j=0}^\infty \bar{\lambda}^{j} (T_\kappa^{\pi})^{j+1} \right)\nonumber \\
& = (1-\bar{\lambda})\sum_{j=0}^\infty \bar{\lambda}^j (1-\kappa T^\pi)(T_\kappa^{\pi})^{j+1}-\bar{\lambda}(1-\kappa)T^\pi\left( (1-\bar{\lambda})\sum_{j=0}^\infty \bar{\lambda}^{j} (T_\kappa^{\pi})^{j+1} \right)\nonumber \\
& = \sum_{j=0}^\infty \bar{\lambda}^j (1-\bar{\lambda})(1-\kappa )T^\pi(T_\kappa^{\pi})^{j}-\bar{\lambda}(1-\kappa)T^\pi\left( (1-\bar{\lambda})\sum_{j=0}^\infty \bar{\lambda}^{j} (T_\kappa^{\pi})^{j+1} \right) \nonumber \\
& = (1-\kappa )T^\pi \underbrace{ \left((1-\bar{\lambda})\sum_{j=0}^\infty \bar{\lambda}^j (T_\kappa^{\pi})^{j}-(1-\bar{\lambda})\sum_{j=0}^\infty \bar{\lambda}^{j+1} (T_\kappa^{\pi})^{j+1}\right)}_{=(1-\bar{\lambda})} \nonumber  \\
&= (1-\bar{\lambda})(1-\kappa )T^\pi = (1-\lambda)T^\pi. \nonumber 
\end{align}

In the forth relation we used Lemma \ref{lemma: left distributive}, in the fifth relation we used \eqref{lambda_PI_relation} , with $\kappa$ instead of $\lambda$, and in the sixth relation we again used Lemma \ref{lemma: left distributive}. We thus get that the operators $T_{\bar{\lambda},\kappa}^\pi$ and $T^\pi_\lambda$ satisfy the following relations.
\begin{align*}
&(1-\lambda T^\pi)T_{\bar{\lambda},\kappa}^\pi = (1-\lambda)T^\pi,\\
&(1-\lambda T^\pi)T^\pi_\lambda = (1-\lambda)T^\pi.
\end{align*}

Subtracting the two equations and using $T^\pi = r^\pi(\cdot) +\gamma P^\pi$ we get\footnote{In this context, we treat $r^\pi(\cdot)$ as a constant mapping from $\mathbb{R}^{|\mathcal{S}|}$ to itself; i.e., for any $v\in\mathbb{R}^{|\mathcal{S}|},\ r^{\pi}(v)=r^\pi$.},
\begin{align*}
\left[1- \lambda (r^\pi + \gamma P^\pi)\right](T_{\bar{\lambda},\kappa}^\pi - T^\pi_\lambda)  = (1- \lambda \gamma P^\pi)(T_{\bar{\lambda},\kappa}^\pi - T^\pi_\lambda) = 0.
\end{align*}

Since $(1- \lambda \gamma P^\pi)$ is non-zero (i.e, exists $v\in \mathbb{R}^{|\mathcal{S}|}$ s.t $(1-\lambda P^\pi)v \neq 0$), we get that
\begin{align*}
T_{\bar{\lambda},\kappa}^\pi = T^\pi_\lambda,
\end{align*}

which concludes the proof.

\section{Equivalence of $\kappa$-VI and $\kappa\lambda$-PI with $\lambda=\kappa$}
\label{supp: kappa_VI_and_kappa_lambda_PI}

The equivalence of $\kappa\lambda$-PI when $\lambda=\kappa$ to $\kappa$-VI is similar to the equivalence of $\lambda$-PI to VI for $\lambda=0$ \cite{alpi}.  To see this, start by considering the $\kappa$-VI algorithm (see Section~\ref{sec: kappa_lambda_PI}). Let $v_k$ be the value of the algorithm at the $k$-th iteration. Then, the value at the next iteration, $v_{k+1}$, is
\begin{align}
\label{eq : kappa-VI}
v_{k+1} = T_\kappa v_{k}  =  \max_{\pi'} T^{\pi'}_\kappa v_k.
\end{align}
Now consider the update performed by the $\kappa\lambda$-PI algorithm with $\lambda=\kappa$ (see Algorithm \ref{alg:klPI}). By construction, it will apply the following update.
\begin{align}
&\pi_{k+1} = \arg\max_{\pi'} T^{\pi'}_\kappa v_k \label{eq : kl-VI_policy}\\
&v_{k+1} =  T^{\pi_{k+1}}_\kappa v_k. \label{eq : kl-VI_value}
\end{align}

Using these updating equations we see that the value at the $k+1$ iteration of this algorithm can be written as
\begin{align*}
v_{k+1} =  T^{\pi_{k+1}}_\kappa v_k = \max_{\pi'} T^{\pi'}_\kappa v_k,
\end{align*}
since $\pi_{k+1}$ is the policy attaining the maximum in \eqref{eq : kl-VI_policy}.  

Thus, both $\kappa$-VI and $\kappa\lambda$-PI with $\lambda=\kappa$ produce the same sequence of values, given equal initial value, $v_0$. In this sense, both of the algorithms are equivalent. However, we note that these algorithms are not \emph{computationally} equivalent. This can be seen by focusing on the computations steps per iteration. The $\kappa$-VI algorithm updates a value function in every iteration whereas $\kappa\lambda$-PI (with $\lambda=\kappa$) calculates a policy \eqref{eq : kl-VI_policy} and then updates the current value \eqref{eq : kl-VI_value}. These two forms do not necessarily share similar computational complexity.

\section{Proof of Proposition \ref{proposition_kappa_greedy_policy_interpretation}}\label{proof_proposition_kappa_greedy_policy_interpretation}
The proof is close in spirit to a related proof in \cite{schulman2015high} and can also be derived using the relations in \eqref{def1}, \eqref{def2} and \eqref{def4}. We explicitly give here the proof for completeness. Furthermore, in this proof we explicitly show that the greedy sets are equal, a notion developed in this work.

We prove the first equality and then the second equality of the proposition. The first equality follows by a direct algebraic manipulation on the random variables inside the mean. For clarity, we use the notation $a_t=\pi(s_t)$.
\begin{align}
&(1-\kappa)\sum_{h=0}^\infty \kappa^h r_{v}^{(h+1)}= \nonumber\\
(1-\kappa)(&r(s_0,a_0)+\gamma v(s_1)+\nonumber\\
&\kappa r(s_0,a_0)+\kappa \gamma r(s_1,a_1)+\kappa\gamma^2 v(s_2)+\nonumber\\
&\kappa^2 r(s_0,a_0)+\kappa^2 \gamma r(s_1,a_1)+\kappa^2 \gamma^2 r(s_2,a_2)+\kappa^2\gamma^3 v(s_3)+\nonumber\\
&\qquad \qquad \qquad \qquad \vdots \qquad \qquad \qquad \qquad)=\nonumber\\
&r(s_0,a_0)+(1-\kappa)\gamma v(s_1)+\nonumber\\
&\kappa \gamma r(s_1,a_1)+(1-\kappa)\kappa\gamma^2 v(s_2)+\nonumber\\
&\kappa^2 \gamma^2 r(s_2,a_2)+(1-\kappa)\kappa^2\gamma^3 v(s_3)+\nonumber\\
&\qquad \qquad \qquad \qquad \vdots \qquad \qquad \qquad \qquad=\nonumber\\
&r(s_0,a_0)+(1-\kappa)\gamma v(s_1)+\nonumber\\
&\kappa \gamma (r(s_1,a_1)+(1-\kappa)\gamma v(s_2))+\nonumber\\
&\kappa^2 \gamma^2 (r(s_2,a_2)+(1-\kappa)\gamma v(s_3))+\nonumber\\
&\qquad \qquad \qquad \qquad \vdots \qquad \qquad \qquad \qquad=\nonumber\\
&\sum_{t=0}^\infty (\kappa\gamma)^t(r(s_t,a_t)+(1-\kappa)\gamma v(s_{t+1})). \label{eq: kappa gamma sum}
\end{align}

In the second relation we used $\sum_{i=h}^\infty \kappa^i=\frac{\kappa^h}{1-\kappa}$, and in the last we packed the infinite sum and introduced a summation variable $t$. Since for every realization of the random variables this equality holds, it also holds for the mean. This concludes the proof of the first equality since we have  explicitly shown \emph{equality} between the optimization criteria.

We now prove the second equality, again with direct algebraic manipulation. We start with the last term in the proposition. Consider again the random variables inside the mean.

\begin{align}
\sum_{t=0}^\infty (\kappa\gamma)^t\delta_v(s_t)&=\sum_{t=0}^\infty (\kappa\gamma)^t(r(s_t,a_t)+\gamma v(s_{t+1})-v(s_t)) \nonumber\\
&=\sum_{t=0}^\infty (\kappa\gamma)^t(r(s_t,a_t)+\gamma v(s_{t+1})-\kappa\gamma v(s_{t+1})) -v(s_0)\nonumber\\
&=\sum_{t=0}^\infty (\kappa\gamma)^t(r(s_t,a_t)+\gamma(1-\kappa) v(s_{t+1})) -v(s_0) \label{eq: sum with v0},
\end{align}
where the second relation holds because $\sum_{t=0}^\infty (\kappa \gamma)^t v(s_t) = \sum_{t=0}^\infty (\kappa \gamma)^t \kappa \gamma v(s_{t+1}) + v(s_0)$.
Eq.~\eqref{eq: sum with v0} varies from \eqref{eq: kappa gamma sum} only by  $v(s_0)$. The solution of the optimization problem, \emph{finding the $\kappa$-greedy policy}, is invariant to the addition of $v(s_0)$, since it is a constant w.r.t. the optimization problem.  This concludes the proof.
\end{document}